\crefname{lemma}{Lemma}{Lemmas}
\crefname{fact}{Fact}{Facts}
\crefname{theorem}{Theorem}{Theorems}
\crefname{corollary}{Corollary}{Corollaries}
\crefname{conjecture}{Conjecture}{Conjectures}
\crefname{claim}{Claim}{Claims}
\crefname{example}{Example}{Examples}
\crefname{problem}{Problem}{Problems}
\crefname{setting}{Setting}{Settings}
\crefname{definition}{Definition}{Definitions}
\crefname{assumption}{Assumption}{Assumptions}
\crefname{subsection}{Subsection}{Subsections}
\crefname{section}{Section}{Sections}
\newtheorem{theorem}{Theorem}[section]
\newtheorem*{theorem*}{Theorem}
\newtheorem*{proposition*}{Proposition}
\newtheorem{lemma}[theorem]{Lemma}
\newtheorem*{lemma*}{Lemma}
\newtheorem*{conjecture*}{Conjecture}
\newtheorem*{fact*}{Fact}
\newtheorem*{exercise*}{Exercise}
\newtheorem*{hypothesis*}{Hypothesis}
\newtheorem{conjecture}[theorem]{Conjecture}
\theoremstyle{definition}
\newtheorem{exercise-easy}[theorem]{Exercise}
\newtheorem{exercise-med}[theorem]{Exercise}
\newtheorem{exercise-hard}[theorem]{Exercise$^\star$}
\newtheorem*{claim*}{Claim}
\newtheorem*{remark*}{Remark}
\newtheorem*{observation*}{Observation}
\newcommand{\Sref}[1]{\hyperref[#1]{\S\ref*{#1}}}
\DeclarePairedDelimiterX{\norm}[1]{\lVert}{\rVert}{#1}
\DeclarePairedDelimiterX{\abs}[1]{\lvert}{\rvert}{#1}
\DeclarePairedDelimiterX{\inp}[2]{\langle}{\rangle}{#1, #2}
\DeclarePairedDelimiterX{\infdivx}[2]{(}{)}{%
  #1\;\delimsize\|\;#2%
}
\DeclareMathOperator*{\E}{\mathbb{E}}
\let\Pr\relax
\DeclareMathOperator*{\Pr}{\mathbf{Pr}}
\newcommand{\mc}[1]{\mathcal{#1}}
\newcommand{\err}[2]{\operatorname{err}_{#2}\left( #1 \right)}
\newcommand{\erm}[1]{\widehat{f}_{S_{#1}}}
\newcommand{\bad}{\widehat{f}}
\newcommand{\mega}{C}
\newcommand{\round}{\lfloor d \rfloor}
\newcommand{\maj}{\mathrm{Maj}}
\newcommand{\Log}{\operatorname{Log}}
\newcommand{\ins}{\mathcal{X}}
\newcommand{\fc}{\mathcal{F}}
\def\com{1}
\newcommand{\ishaq}[1]{
    \if\com1
        \todo[inline,color=blue!30]{\small\textbf{Ishaq:} #1}
    \else
    \fi
}
\newcommand{\kasper}[1]
{
    \if\com1
        \todo[inline,color=green!30]{\small\textbf{Kasper:} #1}
    \else
    \fi
}
\newcommand{\mikael}[1]
{
    \if\com1
        \todo[inline,color=red!30]{\small\textbf{Mikael:} #1}
    \else
    \fi
}
\newcommand{\nikita}[1]
{
    \if\com1
        \todo[inline,color=yellow!30]{\small\textbf{Nikita:} #1}
    \else
    \fi
}
\title{Majority-of-Three: The Simplest Optimal Learner?}
 \author{Ishaq Aden-Ali\thanks{Department of EECS, UC Berkeley. Email: {adenali@berkeley.edu}} \and Mikael Møller Høgsgaard\thanks{Computer Science Department, Aarhus University. Email: {\{hogsgaard, larsen\}@cs.au.dk}} \and Kasper Green Larsen\footnotemark[2] \and Nikita Zhivotovskiy\thanks{Department of Statistics, UC Berkeley. Email: zhivotovskiy@berkeley.edu}}
\date{}
\begin{document}
\sloppy
\maketitle
\begin{abstract}
Developing an optimal PAC learning algorithm in the realizable setting, where empirical risk minimization (ERM) is suboptimal, was a major open problem in learning theory for decades.
The problem was finally resolved by Hanneke a few years ago.
Unfortunately, Hanneke's algorithm is quite complex as it returns the majority vote of many ERM classifiers that are trained on carefully selected subsets of the data.
It is thus a natural goal to determine the simplest algorithm that is optimal.
In this work we study the arguably simplest algorithm that could be optimal: 
returning the majority vote of three ERM classifiers.
We show that this algorithm achieves the optimal in-expectation bound on its error which is provably unattainable by a single ERM classifier. 
Furthermore, we prove a near-optimal high-probability bound on this algorithm's error. 
We conjecture that a better analysis will prove that this algorithm is in fact optimal in the high-probability regime.

\end{abstract}
\section{Introduction}
In the setting of realizable Probably Approximately Correct (PAC) learning~\cite{valiant1984theory, vapnik1964class, vapnik74theory}, the goal is to learn or approximate an unknown target function $f^\star \in \{0,1\}^\ins$ from a labelled training sample $(S,f^\star(S))=((X_1,f^\star(X_1)),\ldots,(X_n,f^\star(X_n)))$, where the $X_i$'s are i.i.d.\ samples from an unknown distribution $P$ over an instance space $\ins$. In the realizable setting, we are furthermore promised that $f^\star$ belongs to a known function class $\fc \subseteq \{0,1\}^\ins$ of Vapnik-Chervonenkis (VC) dimension $d$. 

Given a labelled training sample $(S,f^\star(S))$, a learning algorithm produces a function $\widehat{f}_S \in \{0,1\}^\ins$ with the goal of minimizing the probability of mispredicting the label of a new sample from $P$, where we denote this error by $\err{\widehat{f}_S}{P} := \Pr_{X \sim P}[\widehat{f}_S(X) \neq f^\star(X)]$. The simplest reasonable learning algorithm, known as \emph{empirical risk minimization} (ERM), simply reports an arbitrary function $\widehat{f}_S\in \fc$ that is consistent with $f^\star$ on the training data, i.e.  $\widehat{f}_S(X_i)=f^\star(X_i)$ for all $i=1,\dots,n$. Classic work by Blumer et al. \cite{blumer1989learnability} (the same bound also essentially follows from the earlier works \cite{vapnik1968algorithms, vapnik1971uniform}) shows that for any $\delta>0$, it holds with probability $1-\delta$ over $S$ that any $\widehat{f}_S  \in \mc{F}$ consistent with $f^\star$ on $S$ has 
\begin{equation}\label{eq:erm_error}
\err{\widehat{f}_S}{P} = O\left( \frac{d}{n}\log\left(\frac{n}{d}\right) + \frac{1}{n}\log\left(\frac{1}{\delta}\right)\right).
\end{equation}

On the lower bound side, there exists an instance space $\mathcal{X}$ and function class $\mathcal{F}$ such that for a certain ERM algorithm, there is a target function $f^\star\in\mathcal{F}$ and hard distribution $P$ for which \cref{eq:erm_error} is tight~\cite{haussler1994predicting,auer2007new,simon2015almost,hanneke2016refined}. 
Learning algorithms that always output a function in $\mathcal{F}$ are referred to as \emph{proper} learning algorithms. 
Generally, it is known that not only ERM, but all proper learners fail to achieve the optimal error bound in the PAC learning framework. See the corresponding lower bounds in \cite{bousquet2020proper}.

For \emph{improper} learning algorithms --- algorithms that are allowed to output an arbitrary function $\widehat{f}_S\in \{0,1\}^\ins$ --- known lower bounds on the error only imply that we must have \begin{equation}\label{eq:improper_lowerbound}
    \err{\widehat{f}_S}{P} = \Omega\left( \frac{d}{n} + \frac{1}{n}\log\left(\frac{1}{\delta}\right)\right).
\end{equation}

Developing an algorithm with a matching error upper bound, or strengthening the lower bound, was a major open problem for decades.
This was finally resolved in 2015 when Hanneke~\cite{hanneke2016optimal}, building on the work of Simon~\cite{simon2015almost}, proposed the first optimal algorithm with an error upper bound matching \cref{eq:improper_lowerbound}, leading to the optimal error bound
\begin{equation}
    \label{eq:optimal_bound}
    \Theta\left( \frac{d}{n} + \frac{1}{n}\log\left(\frac{1}{\delta}\right)\right).
\end{equation}
Hanneke's algorithm is based on constructing a large number ($\approx n^{0.79}$) of sub-samples $S_1,S_2,\dots \subseteq S$ of the training data.
This algorithm then runs ERM on each $(S_i,f^\star(S_i))$ to obtain functions $\widehat{f}_{S_1},\widehat{f}_{S_2},\dots$ and finally combines them via a majority vote. 
The sub-samples $S_i$ are constructed to have a carefully designed overlapping structure, and an intricate inductive argument exploiting this structure is then used to argue optimality. Recent work by Larsen~\cite{larsen2023baggingCOLT} shows that the carefully designed overlap structure may instead be replaced by the significantly simpler strategy of sampling each $S_i$ as $\Theta(n)$ samples with replacement from $S$. This algorithm is precisely the classic heuristic known as Bagging, or bootstrap aggregation, due to Breiman~\cite{Breiman2004BaggingP}. Furthermore, the proof shows that a mere $O(\log(n/\delta))$ sub-samples suffice for an optimal sample complexity. The proof is however even more involved than Hanneke's and uses his analysis at its core.

Another line of work studied an alternative learning algorithm, the one-inclusion graph algorithm of Haussler, Littlestone, and Warmuth~\cite{haussler1994predicting} that returns a function $\widehat{f}_{\mathrm{OIG}}$.
This work also introduces the \emph{prediction model} of learning, which focuses on achieving bounds on the \emph{expected error} rather than \emph{high probability} bounds on the error. The one-inclusion graph algorithm was initially shown to have an expected error of
\begin{equation}\label{eq:OIG_risk}
\E_{S\sim P^{n}} \left[\err{\widehat{f}_{\mathrm{OIG}}}{P}\right] \le \frac{d}{n+1},
\end{equation}
which was later proven to be optimal within this prediction model \cite{li2001one}.
Because of the tightness of the in-expectation bound \cref{eq:OIG_risk}, Warmuth conjectured \cite{warmuth2004optimal} that the one-inclusion graph algorithm achieves an error upper bound matching the general lower bound \cref{eq:improper_lowerbound} in the high probability regime. 

Recent work by Aden-Ali, Cherapanamjeri, Shetty, and Zhivotovskiy~\cite{Aden-AliCOLT23} unfortunately refutes this conjecture.
Concretely, they show that for any $d \in \mathbb{N}$, sample size $n \ge d$ and confidence parameter $\delta \geq c d/n$, there exists a function class $\mc{F} \subseteq \{0,1\}^{\mc{X}}$ with VC dimension $d$ and a hard distribution $P$ such that a certain implementation of the one-inclusion graph algorithm has, with probability at least $\delta$,
\[
\err{\widehat{f}_{\mathrm{OIG}}}{P} = \Omega\left(\frac{d}{\delta n}\right).
\]
This result essentially says that, in general, the one-inclusion graph algorithm's high-probability guarantee cannot be better than applying Markov's inequality to the in-expectation guarantee in \cref{eq:OIG_risk}.
In recent work also by Aden-Ali et al.~\cite{Aden-AliFOCS23}, it was shown that if one combines the output of $\Omega(n)$ predictions made by one-inclusion algorithms on prefixes of the training data $((X_1,f^\star(X_1)),\ldots,(X_m,f^\star(X_m)))$ for $m=n/2,\dots,n$ via a majority vote, then the resulting function is optimal in the high probability regime and, therefore, matches the error bound \cref{eq:optimal_bound}.
Unfortunately, the one-inclusion graph algorithm (and this extension) is much less intuitive than the aforementioned algorithms based on taking majority votes of ERMs.

\subsection{The simplest possible optimal algorithm?}
In light of prior work, we have several provably optimal algorithms for PAC learning in the realizable setting. The algorithms and their analyses vary in complexity and a natural question remains: What is the simplest possible optimal algorithm? We know from lower bounds that the algorithm has to be improper and as such must be more complicated than ERM. Bagging is arguably the simplest algorithm among previous proposed algorithms, but has the most difficult analysis. The voting among one-inclusion algorithms has a somewhat simple proof, but the algorithm is not the simplest. 
In this work, we consider what is perhaps the simplest imaginable improper algorithm, \emph{Majority-of-Three} (\emph{ERMs}): Partition $S$ into three equal-sized disjoint pieces $S_1,S_2,S_3$, run the same ERM algorithm on each $(S_i,f^\star(S_i))$ to obtain $\widehat{f}_{S_1}, \widehat{f}_{S_2}, \widehat{f}_{S_3}$, and combine them via a majority vote to produce the function $\maj{(\widehat{f}_{S_1},\widehat{f}_{S_2},\widehat{f}_{S_3})}$. 
Since a majority vote of two functions is undefined when the functions disagree, this is arguably the simplest possible improper algorithm. 
Our first main result shows that this concrete majority vote of three ERMs, which we will refer to as Majority-of-Three throughout, is optimal in expectation.
\begin{restatable}{thm2}{optimalexpectation}\label{expextationboundsection:theorem}
Fix a function class $\mc{F} \subseteq \{0,1\}^{\mc{X}}$ with VC dimension $d$.
Fix a distribution $P$ over $\mc{X}$ and target function $f^\star \in \mc{F}$. 
For any ERM algorithm $\widehat{f} : \mc{X} \times \mc{Z}^* \to \{0,1\}$ it follows that
\[
\E_{S_1, S_2, S_3 \sim P^{n} }\left[ \err{\maj(\widehat{f}_{S_1},\widehat{f}_{S_2},\widehat{f}_{S_3})}{P} \right] = O\left(\frac{d}{n}\right).
\]
\end{restatable}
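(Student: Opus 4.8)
The plan is to reduce the theorem to a second-moment bound on the error function of a \emph{single} ERM, and then prove that bound by a VC-type uniform-convergence argument.

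\emph{Step 1: reduction.} For a sample $T$ write $E_T:=\{x\in\mc X:\widehat f_T(x)\neq f^\star(x)\}$ for the error region of the ERM trained on $T$; by consistency $E_T\cap T=\emptyset$, and $E_T=\widehat f_T\triangle f^\star$ lies in the symmetric-difference class $\mc E:=\{f\triangle f^\star:f\in\mc F\}$, which has VC dimension $\le d$ and contains $\emptyset$. A majority of three labels is wrong precisely when at least two of them are, so pointwise $\mathbbm 1[\maj(\widehat f_{S_1},\widehat f_{S_2},\widehat f_{S_3})(x)\neq f^\star(x)]\le\sum_{1\le i<j\le 3}\mathbbm 1[x\in E_{S_i}\cap E_{S_j}]$. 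Taking expectations over $X\sim P$ and the three independent samples and applying Fubini,
\[
\E_{S_1,S_2,S_3}\!\left[\err{\maj(\widehat f_{S_1},\widehat f_{S_2},\widehat f_{S_3})}{P}\right]\le 3\,\Pr_{S,S'\sim P^n,\,X\sim P}\!\left[X\in E_S\cap E_{S'}\right]=3\int_{\mc X}g(x)^2\,dP(x),
\]
where $g(x):=\Pr_{S\sim P^n}[\widehat f_S(x)\neq f^\star(x)]$ (the identity uses independence of $S,S',X$). Inclusion--exclusion shows the left side also dominates $\int g^2\,dP$, so the task is exactly to prove $\int g^2\,dP=O(d/n)$.

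\emph{Step 2: why the obvious bound fails.} One cannot merely write $\int g^2\le\|g\|_\infty\int g$: by \cref{eq:erm_error} (integrated over $\delta$), $\int g\,dP=\E_S[\err{\widehat f_S}{P}]$ is only $O(\tfrac dn\log\tfrac nd)$ and for adversarial ERMs this is tight, while $\|g\|_\infty$ can be $\Omega(1)$. The improvement must genuinely use that two \emph{independent} ERMs rarely err at the same point.

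\emph{Step 3: the uniform-convergence argument.} Condition on $S$ and set $\beta:=P(E_S)$. The set $E_S\cap E_{S'}$ equals $E_S\cap F$ for some $F\in\mc E$, hence lies in the class $\{E_S\cap F:F\in\mc E\}$ on the subspace $E_S$, which still has VC dimension $\le d$; and it is disjoint from the $\mathrm{Binomial}(n,\beta)$-many points of $S'$ that fall in $E_S$, i.e.\ from an i.i.d.\ $P|_{E_S}$-sample of effective size $\approx n\beta$. The one-sided realizable VC bound underlying \cref{eq:erm_error}, applied on the subspace $E_S$, then gives with probability $1-\delta$ over $S'$ that $P|_{E_S}(E_S\cap E_{S'})\le C\frac{d\log(n\beta/d)+\log(1/\delta)}{n\beta}$, and after multiplying by $\beta$,
\[
P(E_S\cap E_{S'})\ \le\ C\,\frac{d\log_+(n\beta/d)+\log(1/\delta)}{n},
\]
with the factor $\beta$ cancelling. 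Integrating over $S'$ (treating separately the rare event that few points of $S'$ land in $E_S$) yields $\E_{S'}[P(E_S\cap E_{S'})\mid S]=O\!\big(\tfrac dn(1+\log_+\tfrac{n\,\err{\widehat f_S}{P}}{d})\big)$; averaging over $S$ and using the exponential tail of $\err{\widehat f_S}{P}$ from \cref{eq:erm_error} to control the $\log_+$ term then bounds $\int g^2\,dP$.

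\emph{The main obstacle.} Executed naively, Step 3 is off by a whisker: since $\E_S[\err{\widehat f_S}{P}]$ already carries a $\log(n/d)$ and $\err{\widehat f_S}{P}\asymp\tfrac dn\log\tfrac nd$ with constant probability, one only gets $\E_S[\log_+\tfrac{n\,\err{\widehat f_S}{P}}{d}]=\Theta(\log\log\tfrac nd)$, hence $\int g^2=O(\tfrac dn\log\log\tfrac nd)$ rather than $O(d/n)$. Removing this last $\log\log$ factor is the heart of the proof: when $P(E_S)$ is atypically large, the region $E_S$ must be so spread out over $\mc X$ that a fresh sample $S'$ shatters it into pieces of $P$-measure $O(d/n)$ — i.e.\ the estimate on $P(E_S\cap E_{S'})$ should be driven all the way down to $O(d/n)$, not $O(\tfrac dn\log_+\tfrac{nP(E_S)}{d})$. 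I expect this requires a more global, combinatorial treatment — symmetrizing onto the $2n{+}1$ combined points $S\cup S'\cup\{X\}$ and a careful count, in the spirit of the one-inclusion / Haussler-packing analyses, that trades off how badly the adversarial ERM can misbehave on a given subsample against how few subsamples it can misbehave that badly on — and this step is where I anticipate essentially all of the difficulty.
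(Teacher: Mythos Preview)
Your Step~1 reduction to $\int g^2\,dP$ with $g(x)=\Pr_{S\sim P^n}[\widehat f_S(x)\ne f^\star(x)]$ is exactly what the paper does, and you have correctly diagnosed that conditioning on one sample $S$ and applying the realizable VC bound on the error region $E_S$ leaves a $\log\log(n/d)$ factor. That route is in fact precisely the argument the paper uses for its \emph{high-probability} result (\cref{highprobboundsection:theorem}), and the $\log\log$ there is believed to be an artifact of the method.

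The gap is that you have not found the fix, and the fix you anticipate---a symmetrization over $S\cup S'\cup\{X\}$ with one-inclusion/packing combinatorics---is far more elaborate than what is actually needed. The paper's trick is to stop conditioning on the random set $E_S$ altogether and instead partition $\mc X$ by the \emph{deterministic} function $g$: set $R_i=\{x:g(x)\in(2^{-i},2^{-i+1}]\}$, so that $\int g^2\,dP\le 4\sum_{i\ge1}P(R_i)\,2^{-2i}$. The key observation is that $\E_{S}[\err{\widehat f_S}{P_{R_i}}]=\E_{X\sim P_{R_i}}[g(X)]>2^{-i}$ by construction, while the same conditional-distribution uniform-convergence bound you invoke in Step~3 (now applied with the fixed set $R_i$ in place of the random $E_S$) gives $\E_{S}[\err{\widehat f_S}{P_{R_i}}]\lesssim \frac{d\log(P(R_i)n/d)}{P(R_i)n}$. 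Comparing these forces $P(R_i)\le c\,d\,i\,2^i/n$, and then $\sum_i P(R_i)\,2^{-2i}\le (cd/n)\sum_i i\,2^{-i}=O(d/n)$.

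The conceptual difference from your Step~3 is that on $E_S$ you know nothing about $g$, so the log in the VC bound cannot be killed; on $R_i$ you know $g\approx 2^{-i}$, which provides an intrinsic lower bound on the conditional error that squeezes $P(R_i)$ against the VC upper bound. No combinatorial machinery beyond the standard realizable uniform-convergence inequality is required.
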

This result shows that Majority-of-Three matches the optimal expectation bound~\cref{eq:OIG_risk} achieved by the one-inclusion graph algorithm, up to a universal constant. Furthermore, our proof of Theorem~\ref{expextationboundsection:theorem} is in fact quite simple, especially compared to the previous proof that Bagging is optimal.

We note here that a single ERM alone is sub-optimal by a multiplicative $\ln(n/d)$ factor in-expectation (see the well-known lower bound in \cite[Theorem 4.2]{haussler1994predicting}).
We emphasize that in \cref{expextationboundsection:theorem}, the ERMs corresponding to $S_1$, $S_2$ and $S_3$ can be chosen by \emph{any} algorithm $\widehat{f}$ that outputs functions consistent with the sample. The only restriction is that it is the same algorithm $\widehat{f}$ that is run on each $S_i$ (and that the subsets $S_i$ are disjoint and thus i.i.d.).
 
We now turn our attention to the high-probability regime, where we prove the following result.
\begin{restatable}{thm2}{highprobboundsectiontheorem}\label{highprobboundsection:theorem}
Fix a function class $\mc{F} \subseteq \{0,1\}^{\mc{X}}$ with VC dimension $d$. 
Fix a distribution $P$ over $\mc{X}$ and target function $f^\star \in \mc{F}$. 
Fix any ERM algorithm $\widehat{f} : \mc{X} \times \mc{Z}^* \to \{0,1\}$.
For any parameter $\delta\in(0,1/2]$ it holds with probability at least $1-\delta$ over the randomness of $S_1,S_2,S_3\sim P^{n}$ that
\[
 \err{\maj(\widehat{f}_{S_1},\widehat{f}_{S_2},\widehat{f}_{S_3})}{P} = O \left(\frac{d}{n}\log\left(\log\left(\min\left\{\frac{n}{d},\frac{1}{\delta}\right\}\right)\right)+\frac{1}{n}\log\left(\frac{1}{\delta}\right)\right).
\]
\end{restatable}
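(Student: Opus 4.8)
The plan is to reduce to pairwise events and then prove a ``zoom-in'' restriction lemma, which is where the improvement from $\log$ to $\log\log$ comes from. Write $A_i:=\{x\in\mc X:\widehat{f}_{S_i}(x)\ne f^\star(x)\}$ for the error region of the $i$-th ERM. The majority vote errs at a point only if at least two of the three ERMs err there, so the error region of $\maj(\widehat{f}_{S_1},\widehat{f}_{S_2},\widehat{f}_{S_3})$ lies in $\bigcup_{1\le i<j\le3}(A_i\cap A_j)$, and therefore $\err{\maj(\widehat{f}_{S_1},\widehat{f}_{S_2},\widehat{f}_{S_3})}{P}\le\sum_{i<j}P(A_i\cap A_j)$; it suffices to bound each of the three terms with probability $1-\delta/3$.

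For the restriction lemma, fix $i\ne j$ and condition on $S_i$, so that $A_i$ is a fixed set and $p_i:=P(A_i)$ a fixed number; let $Q:=P(\,\cdot\mid A_i)$ be the conditional distribution on $A_i$. The key observation is that $\widehat{f}_{S_j}$, restricted to $A_i$, solves a low-dimensional consistency problem: its error region inside $A_i$ is $(\widehat{f}_{S_j}\triangle f^\star)\cap A_i$, a member of the set system $\{(g\triangle f^\star)\cap A_i:g\in\fc\}$, whose VC dimension is at most $d$ (replacing $\fc$ by $\{g\triangle f^\star:g\in\fc\}$ preserves VC dimension, and intersecting every set with the fixed set $A_i$ can only decrease it), and it avoids the points of $S_j$ that fall in $A_i$---which, conditioned on there being $m$ of them, are i.i.d.\ from $Q$. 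Combining a Chernoff bound (to ensure $m\gtrsim np_i$, unless $np_i=O(d+\log(1/\delta'))$, in which case $P(A_i\cap A_j)\le p_i$ is already good enough) with the Blumer et al.\ consistency bound \cref{eq:erm_error} applied to this VC-$\le d$ class under $Q$ with $m$ samples gives, with probability $1-\delta'$ over $S_j$,
\[
P(A_i\cap A_j)\;=\;p_i\cdot\err{\widehat{f}_{S_j}}{Q}\;\le\;c\left(\frac dn\log^{+}\!\Big(\frac{np_i}{d}\Big)+\frac1n\log\frac1{\delta'}\right),\qquad \log^{+}x:=\max\{1,\log x\}.
\]
The gain is that we pay $\log^{+}(np_i/d)$ rather than $\log(n/d)$: because $\widehat{f}_{S_i}$ is itself a good ERM, $p_i$ is small, so the effective sample-to-dimension ratio inside $A_i$ is only $np_i/d$, which is logarithmic in $n/d$.

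It remains to control $p_i=P(A_i)$, the error of a single ERM on $n$ points. The high-probability form of \cref{eq:erm_error} gives $np_i/d=O\!\big(\log(n/d)+\tfrac1d\log(1/\delta'')\big)$ with probability $1-\delta''$. Feeding this into the restriction lemma, splitting the failure budget $\delta$ among the $O(1)$ applications, and using the elementary inequality $d\log\!\big(\tfrac1d\log\tfrac1\delta\big)\le\log\tfrac1\delta$ to absorb the resulting $\tfrac dn\log^{+}\!\big(\tfrac1d\log\tfrac1\delta\big)$ term into $\tfrac1n\log\tfrac1\delta$, yields $\err{\maj(\widehat{f}_{S_1},\widehat{f}_{S_2},\widehat{f}_{S_3})}{P}=O\!\big(\tfrac dn\log\log(n/d)+\tfrac1n\log\tfrac1\delta\big)$ with probability $1-\delta$.

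Upgrading $\log\log(n/d)$ to $\log\log(\min\{n/d,1/\delta\})$---the improvement in the regime $\delta>d/n$---is where I expect the real difficulty to lie, and it is the part of the plan I am least sure of. The obstruction is that a single ERM's error is controlled only at scale $\tfrac dn\log(n/d)$ with confidence $1-\delta$, whereas here one would like to reach the finer scale $\tfrac dn\log(1/\delta)$. For $\delta$ above a universal constant this is a non-issue: combine \cref{expextationboundsection:theorem} with Markov's inequality. For $d/n<\delta<\text{const}$ the natural moves---a peeling argument over geometric scales of $np_i/d$; exploiting that, for each of the three pairs, only one of the two error regions must be small, so that the relevant event is governed by the product of two independent single-ERM tails; or a hybrid of these---each seem to fall just short, and I expect the final argument needs a more careful quantitative balancing of the confidence parameters across scales. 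The conceptual heart of the proof, however, is the restriction lemma.
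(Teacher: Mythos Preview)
Your restriction lemma is exactly right and matches the paper's Lemma~4.2 (``condsimon''): condition on one sample, then apply uniform convergence to the other sample restricted to a small region, so that the effective ratio $nP(R)/d$ replaces $n/d$ inside the log. With $R=A_i$ and the single-ERM bound on $P(A_i)$, this yields $O\!\big(\tfrac{d}{n}\log\log(n/d)+\tfrac{1}{n}\log\tfrac{1}{\delta}\big)$, which is Simon's bound~\cref{eq:Simon_majority}. So the part you are confident about is correct, and it is indeed the conceptual core.

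The gap you flag is real, and the missing idea is concrete. You apply the restriction lemma with $R=A_i$, the \emph{random} error region of one ERM, and then bound $P(A_i)$ via the single-ERM high-probability bound, which can only give $P(A_i)n/d=O(\log(n/d))$. The paper instead applies the restriction lemma with a \emph{deterministic} set $R_1$ defined via the pointwise error probability $p_x:=\Pr_{S\sim P^n}[\widehat{f}_S(x)\ne f^\star(x)]$: take $R_1=\{x:p_x>c\,\delta/\Log(1/\delta)\}$. Because $\E_S[\err{\widehat{f}_S}{P_{R_1}}]=\E_{X\sim P_{R_1}}[p_X]$ is lower-bounded by the threshold and upper-bounded by the conditional uniform-convergence bound of \cref{lem:expected_uniform_conditional}, one gets $P(R_1)n/d=O(\mathrm{poly}\log(1/\delta)/\delta)$; feeding this into the restriction lemma gives the $\log\log(1/\delta)$ term. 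On the complement one partitions further into dyadic shells $R_i=\{x:p_x\in(2^{-i},2^{-i+1}]\cdot\delta/\Log(1/\delta)\}$ for $i\ge2$; there $p_x$ is so small that $\E[p_X^2]$ is tiny, and Markov's inequality plus a union bound over $i$ handles all shells with total failure $\delta/2$. The peeling you contemplated is thus over levels of the deterministic function $x\mapsto p_x$, not over the random quantity $P(A_i)$; that is the switch your sketch is missing.
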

This bound is sub-optimal due to the $\log (\log(\min\{n/d,1/\delta\}))$ term, 
however the additive $\log(1/\delta)$ term dominates for $\delta \le d^{-d}$.
Thus, Majority-of-Three is optimal both in the constant (\cref{expextationboundsection:theorem}) and high-probability regimes (\cref{highprobboundsection:theorem}). 
Because of this, we conjecture that Majority-of-Three is in fact optimal for all $\delta$ and leave this as an open question for future research.
\begin{conjecture}
Fix a function class $\mc{F} \subseteq \{0,1\}^{\mc{X}}$ with VC dimension $d$. 
Fix a distribution $P$ over $\mc{X}$ and target function $f^\star \in \mc{F}$. 
Fix any ERM algorithm $\widehat{f} : \mc{X} \times \mc{Z}^* \to \{0,1\}$.
For any parameter $\delta\in(0,1)$ it holds with probability at least $1-\delta$ over the randomness of $S_1,S_2,S_3\sim P^{n}$ that
\[
\err{\maj(\widehat{f}_{S_1},\widehat{f}_{S_2},\widehat{f}_{S_3})}{P} = O \left(\frac{d}{n}+\frac{1}{n}\log\left(\frac{1}{\delta}\right)\right).
\]    
\end{conjecture}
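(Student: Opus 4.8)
\emph{A plan toward the conjecture.}
The starting point would be the set-theoretic observation driving \cref{expextationboundsection:theorem}: writing $E_i = \{x : \widehat{f}_{S_i}(x) \neq f^\star(x)\}$ for the error region of the $i$-th ERM, the majority vote errs at $x$ precisely when at least two of the $E_i$ contain $x$, so
\[
\err{\maj(\widehat{f}_{S_1},\widehat{f}_{S_2},\widehat{f}_{S_3})}{P} \;=\; P\big((E_1\cap E_2)\cup(E_1\cap E_3)\cup(E_2\cap E_3)\big) \;\le\; \sum_{1\le i<j\le 3} P(E_i\cap E_j).
\]
In particular each $P(E_i\cap E_j)$ is itself at most $\err{\maj(\widehat{f}_{S_1},\widehat{f}_{S_2},\widehat{f}_{S_3})}{P}$, so \cref{expextationboundsection:theorem} already gives $\E[P(E_1\cap E_2)]=O(d/n)$. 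Since $S_1,S_2,S_3$ are i.i.d.\ and the same algorithm is run on each, a union bound over the three identically distributed pairs would reduce the conjecture to a single-pair \emph{concentration} statement: with probability at least $1-\delta/3$ over $S_1,S_2$, one has $P(E_1\cap E_2) = O\big(d/n + n^{-1}\log(1/\delta)\big)$.

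Before attacking that, I would pin down why the weaker \cref{highprobboundsection:theorem} loses a $\log\log$ factor. Conditioning on $S_1$ (fixing $E_1$ and $p := P(E_1)$) and localizing, the restricted class $\mc{F}|_{E_1}$ still has VC dimension at most $d$, $f^\star|_{E_1}$ is realizable, and $\widehat{f}_{S_2}|_{E_1}$ is consistent with $f^\star$ on the $k := |S_2 \cap E_1| \sim \mathrm{Bin}(n,p)$ points of $S_2$ that land in $E_1$, which conditionally on $k$ are i.i.d.\ from $P$ restricted to $E_1$. Feeding this into the ERM guarantee \cref{eq:erm_error} (with a Chernoff bound to ensure $k = \Omega(np)$, and the trivial bound $P(E_1\cap E_2)\le p$ covering small $np$) gives $P(E_1\cap E_2) = O\big(\tfrac dn\log\tfrac{np}{d} + \tfrac1n\log\tfrac1\delta\big)$, and since $np/d = O\big(\log\tfrac nd + \tfrac1d\log\tfrac1\delta\big)$ with high probability by \cref{eq:erm_error} applied to $\widehat{f}_{S_1}$, the logarithm collapses to $\log\log\min\{n/d,1/\delta\}$. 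The loss is intrinsic to this route, because $\widehat{f}_{S_2}|_{E_1}$ is treated as an \emph{arbitrary} hypothesis consistent on $\approx np$ samples, and a single ERM on $m$ samples can genuinely have error of order $\tfrac dm\log\tfrac md$.

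So the plan would be to avoid localization-as-a-black-box and instead lift the argument behind \cref{expextationboundsection:theorem} itself — pure localization seems to cost a $\log\log(n/d)$ factor even in expectation (essentially what \cref{highprobboundsection:theorem} pays), so the proof of \cref{expextationboundsection:theorem} presumably proceeds differently — exploiting that $\widehat{f}_{S_2}|_{E_1}$ is the restriction of an ERM trained on a \emph{full} fresh sample of size $n$ whose own error region is small. Two concrete avenues I would try: (i) a pair-version of the one-inclusion-graph/leave-one-out argument that underlies the log-free rate $d/(n+1)$ of \cref{eq:OIG_risk}: an exchangeability argument over the $3n+1$ points (the three samples together with one test point) that charges the overlap event directly to the majority's one-inclusion-type structure should likewise avoid the logarithm; or (ii) a direct concentration argument, showing that $P(E_1\cap E_2)$ is sharply concentrated around its $O(d/n)$ mean — via a self-bounding/entropy inequality adapted to the localized structure, or by bounding the moments $\E[P(E_1\cap E_2)^t]$ through an iterated restriction — which would give the desired tail $\Pr[P(E_1\cap E_2) > C(d/n + n^{-1}\log(1/\delta))] \le \delta$.

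The hard part will be boosting in-expectation optimality to high-probability optimality \emph{at the $d/n$ scale}: Markov applied to $\E[P(E_1\cap E_2)]=O(d/n)$ yields only the useless $O(d/(\delta n))$, and every known way of turning in-expectation optimality of such learners into high-probability optimality — majority voting over $\Omega(n)$ one-inclusion predictors \cite{Aden-AliFOCS23}, or Hanneke/Larsen-style bagging \cite{hanneke2016optimal,larsen2023baggingCOLT} — reintroduces either many hypotheses or an intricate inductive overlap structure, i.e.\ exactly the complexity that Majority-of-Three is designed to avoid. I therefore expect the bulk of the work to be a genuinely new concentration statement for the simultaneous error region of two (or three) independent ERMs that is tight at $d/n$ and degrades only as $n^{-1}\log(1/\delta)$; establishing it would very plausibly also sharpen the analysis of bagging.
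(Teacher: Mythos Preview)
The statement you are addressing is a \emph{conjecture} in the paper, not a theorem: the authors explicitly leave it open and do not provide a proof. Your write-up is, appropriately, a research plan rather than a claimed proof, and on that level it is accurate and well-aligned with the paper. In particular, your diagnosis of where the $\log\log$ in \cref{highprobboundsection:theorem} comes from matches the paper's actual argument: the proof of \cref{highprobboundsection:lemma1} (via \cref{condsimon}) proceeds exactly by conditioning on $S_1$, localizing to the error region $B_1=\{x:\widehat f_{S_1}(x)\neq f^\star(x)\}$, and then applying the single-ERM bound (\cref{conduniform}) to $\widehat f_{S_2}$ on the conditional distribution $P_{R\cap B_1}$; the nested logarithm appears precisely because the inner ERM bound is applied at the scale $P(R)\,\mathrm{err}_{P_R}(\widehat f_{S_1})\cdot n$ rather than $n$. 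Your observation that this route treats $\widehat f_{S_2}|_{E_1}$ as an arbitrary consistent hypothesis, and hence cannot beat the single-ERM rate on the restricted problem, is exactly the obstruction.

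Where your proposal goes beyond the paper is in the two suggested avenues. Neither is carried out in the paper, and neither is known to work: the paper's in-expectation proof (\cref{expextationboundsection:theorem} via \cref{lem:joint_mistake}) uses the level-set decomposition $\{R_i\}$ and a contradiction argument based on \cref{lem:expected_uniform_conditional}, not a one-inclusion or leave-one-out exchangeability argument, so avenue (i) would be genuinely new; and avenue (ii), a self-bounding or moment-based concentration for $P(E_1\cap E_2)$ at scale $d/n$, is precisely the missing ingredient the authors flag as open. So your plan is a reasonable map of the territory, but there is no proof here to compare --- the paper has none either.
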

\subsection{An alternative by Simon}
In his breakthrough work, Simon~\cite{simon2015almost} proposed taking majority votes of three ERMs trained on certain sub-samples of the training sample.\footnote{Simon studied majority votes over any odd number $L$ of ERMs trained on specific sub-samples of the data.
He also proved bounds on the error of these majority votes that shrunk as $L$ increased.}
However, his algorithm is slightly different than ours. 
Concretely, he proposed the following algorithm: given an ERM algorithm and labelled training sample, partition $S$ into three equal-sized disjoint pieces $S_1,S_2,S_3$ and for $i=1,2,3$, run any ERM algorithm on $((S_1,\ldots,S_i),f^\star((S_1,\ldots,S_i)))$ to obtain $\widehat{f}_{S_1}, \widehat{f}_{(S_1,S_2)}, \widehat{f}_{(S_1, S_2, S_3)}$, and combine them via a majority vote to produce the function $\maj{(\widehat{f}_{S_1}, \widehat{f}_{(S_1,S_2)}, \widehat{f}_{(S_1, S_2, S_3)})}$. 
Intuitively, more training data for the ERM should be better and Simon also proved the following high-probability upper bound on his algorithm's error:
\begin{equation}\label{eq:Simon_majority}
\err{\maj\left(\bad_{S_1}, \bad_{(S_1,S_2)}, \bad_{(S_1, S_2, S_3)}\right)}{P} = O\left(\frac{d}{n}\log\left(\log\left(\frac{n}{d}\right)\right) + \frac{1}{n}\log\left(\frac{1}{\delta}\right)\right).
\end{equation}
This bound is asymptotically smaller than the tight bound~\cref{eq:erm_error} that holds for a single ERM.

We note that Simon also discusses the applicability of his analysis to more general majorities of ERMs including the Majority-of-Three function $\maj(\widehat{f}_{S_1}, \widehat{f}_{S_2}, \widehat{f}_{S_3})$ analyzed in this work.\footnote{Simon's analysis applies to any majority where each of the participating ERMs is trained on an independent constant fraction of the training sample.}
However, adopting the approach in \cite{simon2015almost}, the error of Majority-of-Three is controlled by the same upper bound as expressed in \eqref{eq:Simon_majority}, which is suboptimal as demonstrated by \cref{expextationboundsection:theorem}. 
Furthermore, we additionally remark that a similar in spirit construction based on the majority of three functions has been extensively studied in Schapire's PhD thesis \cite{schapire1992design}. 
However, his approach (inspired by what we now know as boosting) works with essentially any learning algorithm and is not necessarily limited to ERM. 

In the same work \cite{simon2015almost}, Simon further showed that for a specific function class $\mathcal{F}$ for which there is a choice of target function $f^\star \in \mc{F}$ and hard distribution $P$ that certify the tightness of \cref{eq:erm_error} for a certain choice ERM, his algorithm can actually achieve an optimal upper bound matching \cref{eq:improper_lowerbound} for $\mc{F}$ regardless of the choice of $f^\star \in \mc{F}$ and $P$.
Unfortunately, we prove the following lower bound that shows that the upper bound \cref{eq:Simon_majority} cannot be improved in general, answering a question posed by Simon.

\begin{restatable}{thm2}{lowerbound}\label{thm:lowerbound}
For any sample size $n$ that is divisible by $6$ and positive integer $d \le n$, there is a function class $\mc{F} \subseteq \{0,1\}^{[0,1]}$ with VC dimension $4d$, distribution $P$ over $[0,1]$, target function $f^\star \in \mc{F}$, and an ERM algorithm $\bad : \mc{X} \times \mc{Z}^* \to \{0,1\}$ such that the following holds: given i.i.d.\ training samples $S_1,S_2,S_3 \sim P^{n}$,
\[
\err{\maj\left(\bad_{S_1}, \bad_{(S_1,S_2)}, \bad_{(S_1, S_2, S_3)}\right)}{P} = \Omega\left(\frac{d}{n}\log\left(\log\left(\frac{n}{d}\right)\right)\right),
\]
with probability at least $2/3$ over the randomness of $S=(S_1,S_2,S_3)$.
\end{restatable}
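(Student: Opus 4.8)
\emph{Proof strategy.} The plan is to exhibit a single instance on $[0,1]$ together with an adversarial ERM $\bad$ that forces the two hypotheses trained on the shorter samples, $\bad_{S_1}$ and $\bad_{(S_1,S_2)}$, to \emph{simultaneously} err on one common set $E_2\subseteq[0,1]$ with $P(E_2)=\Omega\bigl(\tfrac dn\log\log(n/d)\bigr)$, with probability at least $2/3$ over $S_1,S_2,S_3$. Since a majority vote of three functions is wrong wherever two of its members are wrong, this lower-bounds $\err{\maj(\bad_{S_1},\bad_{(S_1,S_2)},\bad_{(S_1,S_2,S_3)})}{P}$ and we are free to let the third hypothesis $\bad_{(S_1,S_2,S_3)}$ return the correct function. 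Concretely, take $f^\star\equiv 0$, let $P$ be the uniform distribution on $[0,1]$, and let $\mc F$ be the class of indicators of unions of at most $2d$ intervals; this class contains $f^\star$ and has VC dimension exactly $4d$.

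Define $\bad$ by keying on the input length, which is $n$, $2n$, or $3n$ for the three calls. On a length-$n$ input (the call $\bad_{S_1}$) it returns $\mathbf 1_{E_1}$, where $E_1$ is the union of the $d$ largest sample-free gaps of $S_1$; this is consistent and lies in $\mc F$. On a length-$2n$ input (the call $\bad_{(S_1,S_2)}$) it reads off the length-$n$ prefix $S_1$, recomputes $E_1$, and returns $\mathbf 1_{E_2}$, where inside each of the $d$ intervals comprising $E_1$ one keeps the largest sub-interval that avoids the remaining points $S_2$; this avoids $S_1\cup S_2$, is a union of at most $d$ intervals, hence lies in $\mc F$, and satisfies $E_2\subseteq E_1$. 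On every other input $\bad$ returns $0$. Thus, on the event that the spacings behave typically, $\bad_{S_1}$ and $\bad_{(S_1,S_2)}$ are both wrong on \emph{all} of $E_2$, so the majority is wrong there. The arithmetic of uniform spacings --- the mechanism already behind the $\log(n/d)$ lower bound for a single ERM \cite{haussler1994predicting} --- is what produces the $\log\log$: with constant probability at least $d/2$ of the $d$ largest gaps of $S_1$ each have mass $\Theta\bigl(\tfrac1n\log(n/d)\bigr)$, so $P(E_1)=\Theta\bigl(\tfrac dn\log(n/d)\bigr)$; each such component then receives only $\Theta(\log(n/d))$ points of $S_2$, so its largest surviving sub-gap has mass $\Theta\bigl(\tfrac1n\log(n/d)\bigr)\cdot\tfrac{\log\log(n/d)}{\log(n/d)}=\Theta\bigl(\tfrac1n\log\log(n/d)\bigr)$, and summing over the components gives $P(E_2)=\Omega\bigl(\tfrac dn\log\log(n/d)\bigr)$.

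The steps, in order, are: (i) verify that $\mc F$ has VC dimension $4d$; (ii) check that $\bad$ always outputs a function in $\mc F$ consistent with its sample, so it is a legitimate ERM; (iii) prove the first-level estimate --- with probability at least $1-\eta$, at least $d/2$ of the $d$ largest spacings of $n$ i.i.d.\ uniform points each have mass $\Theta(\tfrac1n\log(n/d))$ --- which also pins down $P(E_1)$; (iv) conditionally on $S_1$, prove the second-level estimate --- inside an interval of mass $\ell=\Theta(\tfrac1n\log(n/d))$, a further $n$ uniform points leave a largest gap of mass $\Omega(\tfrac1n\log\log(n/d))$ with probability at least $1-\eta'$; (v) combine by a union bound over the two bad events and a sum over the $\Theta(d)$ components to conclude that $\err{\maj(\cdots)}{P}=\Omega(\tfrac dn\log\log(n/d))$ with probability at least $2/3$.

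The main obstacle is the probabilistic core, steps (iii) and (iv): these are \emph{lower}-tail statements about \emph{maxima} of spacings, they must hold with a fixed constant probability and at two nested scales at once, and one must also control the \emph{balance} of the first-level gaps --- if the mass of $E_1$ concentrated on a few huge components, the second-level chopping would leave only $\Theta(\tfrac1n\log\log(n/d))$ in total rather than $\Theta(d)$ times that. (When $n/d$ is not large this balance genuinely fails, but then $\log\log(n/d)=\Theta(1)$ and the target degrades to $\Omega(d/n)$, which the same construction still delivers, so the interesting regime is precisely the one in which the first-level gaps are balanced.) Conceptually this is the worst case identified by Simon's upper-bound analysis \cite{simon2015almost}, which bounds the majority's error by the largest portion of $\bad_{S_1}$'s error region that $S_2$ misses; our instance is a matching tightness certificate, so the remaining work is concentration bookkeeping rather than any new idea.
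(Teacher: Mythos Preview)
Your construction is sound and reaches the same lower bound, but the route differs from the paper's in a way worth noting. You build $E_1$ and $E_2$ from \emph{sample-dependent} intervals: the $d$ largest gaps of $S_1$, then the largest surviving sub-gap in each component after $S_2$. The paper instead fixes in advance a two-scale grid depending only on the input length: a coarse partition into $m_1\approx Cm/\ln(Cm/d)$ cells, each refined into $m_2\approx \ln(Cm/d)/\ln\ln(Cm/d)$ sub-cells, together with a second coarse partition of size $m_3$ chosen so that the coarse partition at size $m$ coincides \emph{exactly} with the first-level partition at size $2m$. The ERM then returns the indicator of (the first $d$ empty coarse cells) $\cup$ (the first $d$ empty sub-cells inside coarse cells missed by the first half of the sample). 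The alignment $J_i(m)=I_i(2m)$ guarantees $L_2(S_1)$ and the first level of $L_1((S_1,S_2))$ pick the \emph{same} cells deterministically, so the overlap $E_2\subseteq E_1$ comes for free and the probabilistic work reduces to a single coupon-collector estimate: with constant probability there remain $d$ empty sub-cells, each of mass $1/(m_1m_2)=\Theta(\tfrac1n\log\log(n/d))$.

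What each approach buys: your spacing-based construction is arguably the more natural instantiation of the adversarial ERM and makes the intuition transparent, but its probabilistic core is heavier---you need order statistics of uniform spacings at two nested scales, plus a balance argument. In particular, your step~(v) is looser than it looks: a union bound over the $d$ per-component events in~(iv) does \emph{not} work in general, since with $k=\Theta(\log(n/d))$ points per component the failure probability of a single component is $e^{-k^{1-c}}$, which need not be $O(1/d)$ (take $d=\sqrt n$). You must instead argue that a constant fraction of the $\Theta(d)$ components succeed, via negative association or a second-moment bound on the count of successful components; this is doable but is more than pure bookkeeping. The paper's fixed-grid approach sidesteps all of this: empty-cell counts are sums of indicators over a fixed partition, so a single concentration inequality for sums of geometric random variables (one coupon-collector computation) suffices, and no per-component aggregation is needed.
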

This result shows that Simon's algorithm unfortunately cannot achieve the optimal bound \cref{eq:optimal_bound} in general.
This indicates that it is important that the ERM algorithm used in Majority-of-Three is instantiated on disjoint subsets of the training sample.

\subsection{Notation}\label{notationsection}
We use $\mathcal{X}$ to denote the \emph{instance space}, $\fc \subseteq \{0,1\}^{\mc{X}}$ to denote a function class,
and let $\mc{Z} = \mc{X} \times \{0,1\}$. 
Throughout, $P$ is a distribution over $\mc{X}$ and $f^\star \in \mc{F}$ is the unknown \emph{target function} in the class.
For $n \in \mathbb{N}$ and a distribution $P$, we denote by $P^n$ the product distribution of $P$.
We say that a sequence $S=(X_1\ldots,X_n)$ is a \emph{training sample} of size $n$ where  $X_i$ are i.i.d.\ samples from a distribution $P$.
For a training sample $S = (X_1, \dots, X_n)$, we find it useful to write $(S, f^\star(S)) = ((X_1,f^\star(X_1)),\ldots,(X_n,f^\star(X_n)))$, and we call this the \emph{labelled training sample}. 
For training samples $S_1=(X_1,\ldots,X_n)$ and $S_2=(X_{n+1},\ldots,X_{n+m})$ we let $(S_1,S_2)=(X_1,\ldots,X_n,X_{n+1},\ldots,X_{n+m})$, and for $S_1,$ $S_2$ and $S_3$ we take $(S_1,S_2,S_3)=((S_1,S_2),S_3)$.
We define the error of a binary function $f$ under distribution $P$ and target function $f^\star$ to be $\err{f}{P} = \Pr_{X \sim P}\left[ f(X) \not = f^\star(X) \right]$.
For any measurable set $R \subseteq \mathcal{X}$, we define $P_R$ to be the conditional distribution of $P$ restricted to $R$, i.e.\ for $X\sim P_R$ we have that for any measurable function $g$ that
$\E_{X\sim P_R}\left[g(X)\right]=\E_{X\sim P}\left[g(X)\mathbf{1}\{{X\in R}\}\right]/\Pr_{X\sim P}\left[X\in R\right].$

For a function class $\mathcal{F}$ and subset $U=\{x_1,\ldots,x_d\}\subseteq \mathcal{X}$ of $d$ points we let $\mathcal{F} \mid_{U}$ denote the set $\{y\in \{0,1\}^d\mid \exists f\in \mathcal{F}: \forall i\in[d], \; f(x_i)=y_i \}$.
The \emph{Vapnik-Chervonenkis (VC) dimension} is then defined as the largest number $d$ such that there exists a point set $U\subseteq \mathcal{X}$ of size $d$ such that the cardinality of $\mathcal{F}\mid_U$ is $2^d$. We use $\log(x)$ and $\ln(x)$ to denote $\log_2(x)$ and $\log_e(x)$ respectively and we also use $\Log(x):=\max\{2,\log_2(x)\}$ to denote a truncated logarithm.

Let $\mc{Z}^* = \cup_{i=1}^{\infty} \mc{Z}^i$ be the set of all possible labelled training samples.
We define a \emph{learning algorithm} $\widehat{f}$ to be a mapping $\widehat{f} : \mc{X} \times \mc{Z}^* \to \{0,1\}$.
That is, given a labelled training sample $(S,f^\star(S))$ as input, $\widehat{f}(\cdot ; (S,f^\star(S))) : \mc{X} \to \{0,1\}$ is the function that is learned from $(S,f^\star(S))$.
For ease of reading, we often denote the learned function by $\widehat{f}_S \coloneqq \widehat{f}(\cdot ; (S,f^\star(S)))$.
A learning algorithm $\widehat{f}$ is an \emph{Empirical Risk Minimizer (ERM)} for the class $\mc{F}$ if, given a labelled training sample $(S,f^\star(S))$ as input, it output a function $\widehat{f}_S$ in $
\mc{F}$ that satisfies $\widehat{f}_S(X_i) = f^\star(X_i)$ for every $X_i$ that appears in $S$.
We define the majority vote of $k$ binary functions $f_1, \dots, f_k : \mc{X} \to \{0,1\}$ to be the function 
\[
\maj(f_1, \dots, f_k)(x) \coloneqq \mathbf{1}\{ f_1(x) + \dots + f_k(x) > k/2\}.
\]
\section{Majority-of-Three is optimal in-expectation}\label{expectationupperboundsection}
In this section, we prove the main in-expectation result for the Majority-of-Three algorithm.
Before we prove our result, we will find it helpful to introduce some auxiliary notation. 
Throughout this section, we set $\widehat{f} : \mc{X} \times \mc{Z}^* \to \{0,1\}$ to be a fixed (but arbitrary) ERM algorithm.
Fix a distribution $P$ over $\mathcal{X}$ and let $f^\star \in \mc{F}$ be the target function.
For any $x \in \mathcal{X}$ we let 
\[
p_x(n,f^\star,P) = \Pr_{S \sim P^{n}}[\widehat{f}_S(x)\not=f^\star(x)].
\]
In words, $p_x(n,f^\star,P)$ is the chance that $\widehat{f}_S$ errs on the point $x$ for an \emph{average} sample $S \sim P^n$.
We now define a partition of $\mathcal{X}$ based on $p_x(n,f^\star,P)$. 
Consider the following sets for any $i\in\mathbb{N}$:
\[
R_i(n,f^\star,P)=\{x\in\mathcal{X}: p_x(n,f^\star,P) \in (2^{-i},2^{-i+1}]\}.
\]
We often write $R_i = R_i(n,f^\star,P)$ and $p_x = p_x(n,f^\star,P)$ since $n$, $P$, and $f^\star$ will always be clear from the context.
With this notation in place, we are now ready to prove that Majority-of-Three has an optimal in-expectation  
upper bound on its error.
\optimalexpectation*

To prove~\cref{expextationboundsection:theorem}, we require the following lemma which says that two ERMs trained on 2 i.i.d.\ training samples of the same size rarely makes a mistake on the same point.
\begin{lemma}\label{lem:joint_mistake}
Fix a function class $\mc{F} \subseteq \{0,1\}^{\mc{X}}$ with VC dimension $d$.
Fix a distribution $P$ over $\mc{X}$ and target function $f^\star \in \mc{F}$. 
For any ERM algorithm $\widehat{f} : \mc{X} \times \mc{Z}^* \to \{0,1\}$ it follows that
\[
\E_{S_1, S_2 \sim P^{n} }\left[\Pr_{X \sim P}\left[\widehat{f}_{S_1}(X) \not = f^\star(X) \wedge \widehat{f}_{S_2}(X) \not= f^\star(X) \right]\right] \le c\frac{d}{n},
\]
where $c$ is a universal constant.
\end{lemma}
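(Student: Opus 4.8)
The plan is to reduce the statement to a bound on the level sets $R_i$ of the function $p_x$, prove a \emph{localized} ERM bound on each $R_i$ where the effective sample size is smaller, and then sum up.

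\textbf{Reduction to the layers $R_i$.} Since $S_1$, $S_2$ and $X$ are mutually independent and $S_1, S_2$ have the same law, Fubini's theorem gives
\[
\E_{S_1, S_2 \sim P^{n}}\!\left[\Pr_{X\sim P}\!\left[\widehat f_{S_1}(X)\neq f^\star(X)\wedge\widehat f_{S_2}(X)\neq f^\star(X)\right]\right]=\E_{X\sim P}\!\left[p_X^{2}\right],
\]
because for a fixed $x$ the two events are independent, each of probability $p_x$. Partitioning according to the $R_i$ (points with $p_x=0$ contribute nothing) and using $p_x\le 2^{-i+1}$ on $R_i$, we get $\E_{X\sim P}[p_X^2]\le 4\sum_{i\ge 1}4^{-i}q_i$, where $q_i:=\Pr_{X\sim P}[X\in R_i]$; so it suffices to show $\sum_{i\ge 1}4^{-i}q_i=O(d/n)$.

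\textbf{A localized ERM bound.} The heart of the argument is the estimate, for a universal constant $C$ and every $i$,
\[
m_i:=\E_{X\sim P}\!\left[p_X\mathbf{1}\{X\in R_i\}\right]=\E_{S\sim P^{n}}\!\left[\Pr_{X\sim P}\!\left[\widehat f_{S}(X)\neq f^\star(X),\,X\in R_i\right]\right]\le C\frac{d}{n}\ln\!\left(e+\frac{nq_i}{d}\right).
\]
To prove it I would observe that the set $\{x\in R_i:\widehat f_S(x)\neq f^\star(x)\}$ lies in the range space $\{R_i\cap\{x:g(x)\neq f^\star(x)\}:g\in\mc F\}$, which has VC dimension at most $d$ (XOR-ing with the fixed $f^\star$ is a bijection on labelings, and intersecting all ranges with a fixed set does not increase VC dimension), and that it contains none of the points of $S$ lying in $R_i$; conditioned on their number $k$, those points are $k$ i.i.d.\ draws from $P_{R_i}$, independent of the rest of $S$. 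Applying the classical realizable PAC bound~\eqref{eq:erm_error} to this range space and the distribution $P_{R_i}$ bounds the conditional expectation of $P_{R_i}\!\left(\{x\in R_i:\widehat f_S(x)\neq f^\star(x)\}\right)$ by $\min\{1,O((d/k)\ln(e+k/d))\}$; multiplying by $q_i$ (to convert $P_{R_i}$-measure to $P$-measure) and averaging over $k\sim\mathrm{Bin}(n,q_i)$ — dispatching the low-probability event $\{k\ll nq_i\}$ by a Chernoff bound and using that $\lambda e^{-\lambda/8}=O(1)$ — yields the displayed bound, which degrades gracefully to the trivial $m_i\le q_i$ when $nq_i\lesssim d$.

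\textbf{Finishing the proof.} Since $p_x>2^{-i}$ on $R_i$ we have $2^{-i}q_i\le m_i$, so the localized bound reads $2^{-i}v_i\le C\ln(e+v_i)$ with $v_i:=nq_i/d$. Because $v\mapsto v/\ln(e+v)$ is increasing, this constraint forces $v_i\le C'2^{i}(i+1)$ for a universal $C'$ (otherwise the two sides of the inequality would be separated by a constant factor). Hence
\[
4^{-i}q_i=\frac{d}{n}\,2^{-i}\bigl(2^{-i}v_i\bigr)\le \frac{Cd}{n}\,2^{-i}\ln\!\bigl(e+C'2^{i}(i+1)\bigr)\le C''\frac{d}{n}\,(i+1)2^{-i},
\]
and summing the convergent series $\sum_{i\ge1}(i+1)2^{-i}=3$ gives $\sum_i4^{-i}q_i=O(d/n)$, so $\E_{X\sim P}[p_X^2]=O(d/n)$, as required. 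The main obstacle is the localized bound together with the way it is exploited: within $R_i$ the ERM is consistent with only $\approx nq_i$ of the $n$ samples, which replaces the $\ln(n/d)$ factor of a single global ERM by the smaller $\ln(nq_i/d)$, and this per-level logarithm is precisely what makes the weighted sum $\sum_i 4^{-i}q_i$ collapse to $O(d/n)$; some care is needed because $|S\cap R_i|$ is a Binomial random variable and because the bound must stay meaningful when $nq_i$ is small compared to $d$.
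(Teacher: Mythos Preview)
Your proposal is correct and follows essentially the same route as the paper: both reduce to $\E_{X\sim P}[p_X^2]\le 4\sum_{i\ge 1}4^{-i}q_i$ via independence and the layer decomposition, both invoke the same localized ERM bound $m_i\le C(d/n)\ln(e+nq_i/d)$ (this is exactly the paper's \cref{lem:expected_uniform_conditional}, proved there by the same Chernoff-plus-uniform-convergence argument you sketch), and both combine it with the lower bound $2^{-i}q_i\le m_i$ to force $q_i=O(d\,i\,2^{i}/n)$ and conclude. The only cosmetic difference is that the paper phrases the implication $2^{-i}v_i\le C\ln(e+v_i)\Rightarrow v_i\le C'2^{i}i$ as a contradiction, whereas you state it directly.
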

We postpone the proof of \Cref{lem:joint_mistake} for now and show how it implies \Cref{expextationboundsection:theorem}.
\begin{proof}[Proof of \Cref{expextationboundsection:theorem}] 
For any fixed $x \in \mathcal{X}$ and fixed samples $S_1, S_2, S_3$, if $\maj(\widehat{f}_{S_1},\widehat{f}_{S_2},\widehat{f}_{S_3})(x) \not = f^\star(x)$, then there must be at least two distinct indices $i, j \in [3]$ such that $\widehat{f}_{S_i}(x) \not = f^\star(x)$ \emph{and} $\widehat{f}_{S_j}(x) \not = f^\star(x)$. So,
\begin{align*}
     \err{\maj(\widehat{f}_{S_1},\widehat{f}_{S_2},\widehat{f}_{S_3})}{P}  &=    \Pr_{X \sim P}\left[ \maj(\widehat{f}_{S_1},\widehat{f}_{S_2},\widehat{f}_{S_3})(X) \not = f^\star(X) \right]  \\
    &\le \sum_{\substack{i,j \in [3]\\ i<j}} \Pr_{X \sim P}\left[\widehat{f}_{S_i}(X) \not = f^\star(X) \wedge \widehat{f}_{S_j}(X) \not= f^\star(X) \right].
    \end{align*}
Combining the above and \cref{lem:joint_mistake} gives us 
\[
\E_{S_1, S_2, S_3 \sim P^{n} }\left[\err{\maj(\widehat{f}_{S_1},\widehat{f}_{S_2},\widehat{f}_{S_3})}{P}\right] \le 3c\frac{d}{n}.
\]
This concludes the proof.
\end{proof}

We now move on to proving \cref{lem:joint_mistake}, where we will use the following lemma.
\begin{restatable}{lem2}{conditional}\label{lem:expected_uniform_conditional}
Fix a function class $\mc{F} \subseteq \{0,1\}^{\mc{X}}$ with VC dimension $d$.
Fix a distribution $P$ over $\mathcal{X}$, target function $f^\star \in \mc{F}$, and $R \subseteq \mc{X}$ such that $\Pr_{X \sim P}\left[X \in R\right] \not= 0$.
For any ERM algorithm $\widehat{f} : \mc{X} \times \mc{Z}^* \to \{0,1\}$ it follows that
\[
\E_{S \sim P^n}\left[\err{\widehat{f}_S}{P_R}\right]\leq 20\frac{d\Log(e\Pr_{X \sim P}[X \in R]n/d)}{\Pr_{X \sim P}[X \in R]n}.
\]
\end{restatable}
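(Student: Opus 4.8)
The plan is to reduce to the standard realizable ERM bound \cref{eq:erm_error} by conditioning on how many of the $n$ sample points land in $R$. Write $q=\Pr_{X\sim P}[X\in R]>0$. First I would dispose of the trivial range: since $\Log(\cdot)\ge 2$, the claimed right-hand side is at least $40d/(qn)$, which is $\ge 1$ whenever $qn\le 40d$; as any classification error is at most $1$, the bound holds automatically in that case, so I may assume $qn>40d$ from now on.

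Next, let $K=\lvert\{i: X_i\in R\}\rvert\sim\mathrm{Bin}(n,q)$, so $\E[K]=qn$, and split on the event $E=\{K\ge qn/2\}$. A Chernoff bound gives $\Pr[\bar E]\le e^{-qn/8}$, and on $\bar E$ I would just use $\err{\widehat{f}_S}{P_R}\le 1$; since $xe^{-x}\le 1$ for all $x\ge0$, this contributes at most $e^{-qn/8}\le 8/(qn)\le 8d/(qn)$ to the expectation, which is comfortably absorbed into the target bound.

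The heart of the argument is the contribution of $E$. Here I would condition on the index set $I=\{i: X_i\in R\}$ \emph{and} on the off-$R$ values $\{X_j: j\notin I\}$. Because the coordinates of $S$ are independent, the points $\{X_i: i\in I\}$ are then exactly $\lvert I\rvert=k$ i.i.d.\ draws from $P_R$, where $k\ge qn/2>20d\ge d$ on $E$. The key observation is that $\widehat{f}_S\in\mc{F}$ is consistent with $f^\star$ on all of $S$, hence in particular on these $k$ points, so the \emph{uniform} realizable guarantee \cref{eq:erm_error} — which bounds the $P_R$-error of \emph{every} function in $\mc{F}$ consistent with a sample of $k$ i.i.d.\ $P_R$-points — applies to $\widehat{f}_S$ even though $\widehat{f}_S$ was trained on the whole of $S$. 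Writing that guarantee as: for every $\delta'\in(0,1)$, with conditional probability $1-\delta'$, $\err{\widehat{f}_S}{P_R}\le C\bigl(\tfrac{d}{k}\log(k/d)+\tfrac1k\log(1/\delta')\bigr)$ with $C$ an explicit universal constant, and integrating the tail (substitute $\delta'=e^{-t}$ and integrate over $t\ge0$), the additive $\log(1/\delta')$ term contributes only $O(1/k)$, which is absorbed using $d\ge1$ and $\Log(\cdot)\ge2$; this yields $\E[\err{\widehat{f}_S}{P_R}\mid I,\{X_j\}_{j\notin I}]\le C'\,d\,\Log(k/d)/k$ for a universal constant $C'$.

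Finally I would use that $k\mapsto\Log(k/d)/k$ is non-increasing on $[d,\infty)$: it equals $2/k$ on $[d,4d]$, has derivative $(1-\ln(k/d))/(k^2\ln 2)<0$ on $[4d,\infty)$, and the two pieces agree at $k=4d$. Hence on $E$ (where $k\ge qn/2\ge d$) the conditional expectation is at most $C'\,d\,\Log(qn/(2d))/(qn/2)\le 2C'\,d\,\Log(eqn/d)/(qn)$, using that $\Log$ is non-decreasing and $qn/(2d)\le eqn/d$. Adding the $\bar E$ contribution gives a bound of the form $(2C'+8/e)\,d\,\Log(eqn/d)/(qn)$, and tracking the explicit constant in \cref{eq:erm_error} (e.g.\ the Blumer et al.\ form of the realizable bound) through this computation pins the leading constant at $\le 20$, as claimed. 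I expect the main obstacle to be precisely the conditioning step: one must verify that revealing which coordinates fall in $R$ (together with the remaining coordinates) leaves the in-$R$ coordinates distributed as i.i.d.\ $P_R$, so that the uniform ERM bound may legitimately be invoked for $\widehat{f}_S$ despite $\widehat{f}_S$ depending on the entire sample; the remaining ingredients are a routine tail integration, the elementary monotonicity of $\Log(k/d)/k$, and the small-sample case check.
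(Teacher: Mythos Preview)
Your proposal is correct and follows essentially the same approach as the paper's proof: a trivial small-$qn$ case, a Chernoff bound on the number of samples landing in $R$, conditioning so that the in-$R$ points are i.i.d.\ from $P_R$, applying the uniform realizable bound and integrating the tail, and finishing with the monotonicity of $k\mapsto\Log(k/d)/k$. The only differences are cosmetic (your trivial threshold is $40d$ versus the paper's $4d$, and you condition on the index set and off-$R$ values explicitly whereas the paper conditions on $\{|\{i:X_i\in R\}|=m\}$), so there is nothing substantively new to compare.
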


\cref{lem:expected_uniform_conditional} is an immediate consequence of the celebrated uniform convergence principle and a simple proof can be found in~\cref{lem:expected_uniform_conditionalsubsection}. 
We now prove \cref{lem:joint_mistake}.
\begin{proof}[Proof \cref{lem:joint_mistake}]
Let $S_1$ and $S_2$ be independent samples from $P^n$.
By the independence of $S_1$ and $S_2$ and the definition of $p_x$ we have, for any $x \in \mc{X}$, that
\begin{align*}
&\Pr_{S_1, S_2 \sim P^{n} }\left[\widehat{f}_{S_1}(x) \not = f^\star(x) \wedge \widehat{f}_{S_2}(x) \not= f^\star(x) \right]=\prod_{i=1}^2\Pr_{S_i \sim P^{n} }\left[\widehat{f}_{S_i}(x) \not = f^\star(x) \right]\\&=\Pr_{S_1\sim P^{n} }\left[\widehat{f}_{S_1}(x) \not = f^\star(x) \right]^2=p_x^2.    
\end{align*}

Using the above, the law of total expectation with partitioning $(R_i)_{i\in\mathbb{N}}$, and swapping the order of expectations ($X$ and $(S_1,S_2)$ being independent), we get that 
\begin{align*}
    &\E_{S_1, S_2 \sim P^{n} }\left[\Pr_{X \sim P}\left[\widehat{f}_{S_1}(X) \not = f^\star(X) \wedge \widehat{f}_{S_2}(X) \not= f^\star(X) \right]\right]
\\&=\sum_{i=1}^{\infty} \Pr_{X \sim P}\left[X\in R_i\right]\E_{X \sim P}\left[p_X^2|X\in R_i\right]\leq\sum_{i=1}^{\infty} \Pr_{X \sim P}\left[X\in R_i\right]2^{-2i+2},
\end{align*}
where the inequality follows from the fact that $p_x \le 2^{-i+1}$ for every $x\in R_{i}$.
We will now show that $\Pr_{X \sim P}\left[X\in R_i\right]\leq c di2^{i}/n $ for every $i\in\mathbb{N}$ (for a universal constant $c\geq 1$ chosen below), which combined with the above gives us
\begin{align*}
\E_{ S_1, S_2 \sim P^{n} }\left[\Pr_{X \sim P}\left[\widehat{f}_{S_1}(X) \not = f^\star(X) \wedge \widehat{f}_{S_2}(X) \not= f^\star(X) \right]\right]\leq \frac{4cd}{n}\sum_{i=1}^{\infty} i2^{-i}\leq 8c\frac{d}{n}.
\end{align*}
This yields the claim with the constant $8c$.
Towards a contradiction, assume there is an $i \in \mathbb{N}$ such that $\Pr_{X \sim P}\left[X\in R_i\right]> cdi2^{i}/n $, which is equivalent to $\Pr_{X \sim P}\left[X\in R_i\right]n/d> ci2^{i}\geq 1 $.
Using this assumption, the fact that $x\rightarrow \Log(ex)/x$ is decreasing for $x > 0$, and \cref{lem:expected_uniform_conditional},
we have
\begin{equation}\label{eq:not_simple}
\E_{S_1 \sim P^{n}}\left[\err{\erm{1}}{P_{R_i}}\right] \le 20\frac{\Log(e\Pr_{X \sim P}\left[X\in R_i\right]n/d)}{(\Pr_{X \sim P}\left[X\in R_i\right]n/d)}\leq 20\frac{\Log\left(eci2^i\right)}{ci2^i}.
\end{equation}
By changing the order of expectations of the left hand side of the above and using the fact that $p_x > 2^{-i}$ for every $x \in R_i$, we also have
\begin{equation}
\E_{S_1 \sim P^n} \left[\err{\widehat{f}_{S_1}}{P_{R_i}}\right] = \E_{X \sim P_{R_i}} \left[ p_{X}\right]  > 2^{-i} . \label{eq:simple}
\end{equation}
Combining the upper bound \eqref{eq:not_simple}, the lower bound \eqref{eq:simple}, and the fact that the function $x\rightarrow \Log(ex)/x$ is decreasing for $x>0$, we get
\begin{align*}
1 < 20\frac{\Log\left(eci2^i\right)}{ci}\leq 20\left(\frac{\Log\left(eci\right)}{ci}+\frac{2}{c}\right)
\leq 20\frac{\Log(ec)+2}{c}.
\end{align*}
However, for $c$ large enough, the right hand side of the above is strictly less than $1$.
This gives us the desired contradiction and concludes the proof.
\end{proof}
\section{A lower bound for certain majorities}
In this section, we prove that not all majorities of 3 ERMs trained on subsets of the data are optimal. 
In particular, we show that Simon's~\cite{simon2015almost} original partitioning scheme of the training sample into 3 sub-samples can produce a majority of 3 ERMs with sub-optimal error. 
Recall Simon's algorithm: partition the training sample $S = (S_1, S_2, S_3)$ into 3 equal pieces $S_1, S_2, S_3$, train 3 ERMs $\widehat{f}_{S_1}, \widehat{f}_{(S_1,S_2)}, \widehat{f}_{(S_1,S_2,S_3)}$, and return the majority vote $\maj(\widehat{f}_{S_1}, \widehat{f}_{(S_1,S_2)}, \widehat{f}_{(S_1,S_2,S_3)})$.
Simon proved that this algorithm enjoys the PAC upper bound 
\[
\err{\maj\left(\bad_{S_1}, \bad_{(S_1,S_2)}, \bad_{(S_1, S_2, S_3)}\right)}{P} = O\left(\frac{d}{n}\log\left(\log\left(\frac{n}{d}\right)\right) + \frac{1}{n}\log\left(\frac{1}{\delta}\right)\right).
\]
The next theorem shows that this algorithm unfortunately has a matching lower bound on its error.
\lowerbound*
Comparing the above bound with the upper bound in \cref{expextationboundsection:theorem}, we see that if the ERMs did not overlap in their sub-samples the log factor would not be present.
The construction we use in our lower bound is a modification of the usual construction used to prove a lower bound on the error of a single ERM (see~\cite{auer2007new,simon2015almost}).
In these constructions, one takes the domain $\mc{X}$ to be a finite set of size roughly $n/\log(n/d)$ where $n \ge d$ is the sample size\footnote{These results are often stated as lower bounds on the \emph{sample complexity} for some target error $\epsilon$.} and the function class $\mc{F}$ is taken to be all functions that assign the value $1$ to at most $d$ points on $\mc{X}$.
Furthermore, the target function is set to be the $0$ function, and the sampling distribution is the uniform distribution over $\mc{X}$.
Finally, the ``bad'' ERM algorithm returns any function that assigns as many 1's to the domain as possible, while  being consistent on the observed samples.
The error of this ERM is tightly connected to the number of unique elements we sample from the domain.
One can then use a coupon collector argument to show that the error is $\Omega(d\log(n/d)/n)$ with constant probability.

Simon noticed that we cannot directly use this ``hard instance'' to prove a lower bound on his algorithm due to the structure of the class $\mc{F}$~\cite[Theorem 7]{simon2015almost}.
We get around this by considering a version of this construction that uses a continuous domain (instead of finite) and a function class consisting of functions that are unions of intervals (instead of points).

Before we prove \cref{thm:lowerbound}, it will be convenient to introduce the following notation. 
For a positive integer $d$ and non-empty set $A$, we define $A_{\round}$ to be the set consisting of the smallest $d$ elements of $A$ with respect to an ordering of the elements of $A$. The ordering we use will be clarified when needed.
We now prove \cref{thm:lowerbound}.
\begin{figure}[ht]
    \centering
    \bigskip
    \input{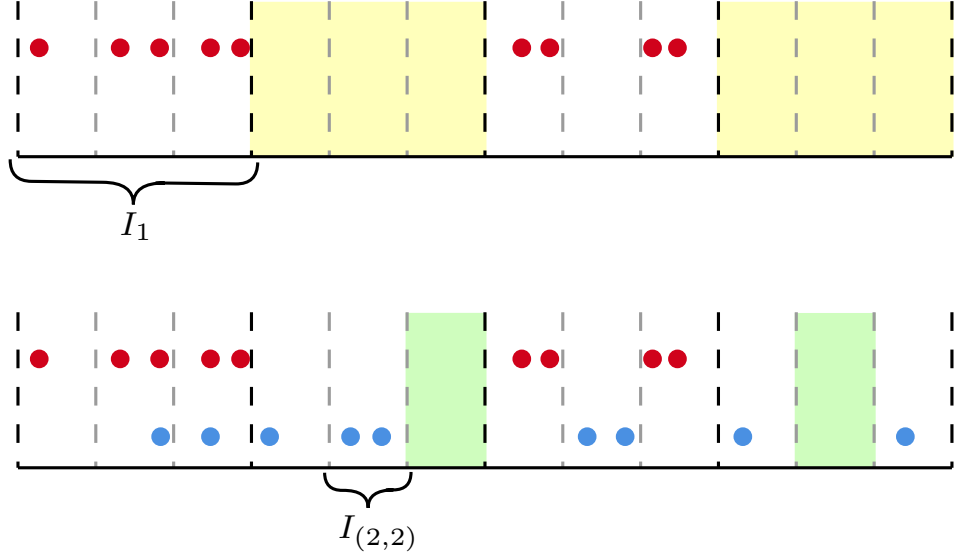}
    \caption{
    An illustration of the partitioning of the interval $(0,1]$ for a training sample consisting of $m = 18$ points with $d=2$.
    The interval $(0,1]$ is partitioned into $4$ intervals $I_{1}, \dots, I_{4}$.
    Each interval $I_{i}$ is further partitioned into the $4$ subintervals $I_{(i,1)}, \dots, I_{(i,4)}$.
    The red points correspond to the first half of the sample $(X_1, \dots, X_{9} )$ and the blue points correspond to the second half of the sample $(X_{10}, \dots, X_{18} )$.
    The yellow highlighted regions are the first $d$ intervals $I_2$ and $I_4$ that contain no points from $(X_1, \dots, X_{9})$.
    The green highlighted regions are the first $d$ subintervals of $I_2$ and $I_4$ that contain no points from $(X_{10}, \dots, X_{18})$.
    The green intervals are added to the union of intervals used by $\widehat{f}_S$ as their indices correspond to the set $L_1(S)$.
    }
    \label{fig:lowerbound}
    \label{fig:lowerboundv2}
\end{figure}
\begin{proof}[Proof of \cref{thm:lowerbound}]
Fix a sample size $n$ divisible by $6$ and positive integer $d \le n$.
Throughout, we will assume any interval considered is left-open and right-closed.
A collection of intervals $I_1, \dots, I_t \subseteq [0,1]$ can be viewed as the binary function $f_{I_1 \cup \dots \cup I_t}$ that satisfies $f_{I_1 \cup \dots \cup I_t}(x) = 1$ if and only if there is an index $j$ such that $x \in I_j$.
We will consider the function class $\mc{F}$ that is the collection of all functions corresponding to the union of at most $2d$ interval.
It is not hard to show that this class has VC dimension $4d$.
We take $P$ to be the uniform distribution on the domain $[0,1]$ and choose the target function $f^\star$ to be the $0$ function on the domain $[0,1]$.

We now describe the ``bad'' ERM algorithm $\bad : \mc{X} \times \mc{Z}^* \to \{0,1\}$.
For the remainder of the proof, $\mega > 0$ is a large universal constant that we will determine below.
For a training sample size $m$, we define three collections of sets:
\begin{enumerate}
    \item $\{I_i(m) : i \in [m_1]\}$ is the unique partition of $(0,1]$ into $m_1 \coloneqq \lceil \mega m/\ln\left(\mega m/d\right) \rceil$ intervals of the same length. 
    \item $\{I_{i,j}(m) : i \in [m_1], j \in [m_2]\}$ where, for a fixed $i$, $\{I_{i,j}(m) : j \in [m_2]\}$ is the unique partition of $I_i(m)$ into $m_2 \coloneqq \lceil 4\mega m/\left(m_1\ln\left(\ln\left(\mega m/d\right) \right)\right) \rceil$ intervals of the same length.
    \item $\{J_i(m) : i \in m_{3}\}$ is the unique partition of $(0,1]$ into $m_3 \coloneqq \lceil 2\mega m/\ln\left(2\mega m/d\right) \rceil$ intervals of the same length. 
\end{enumerate}
Given a labelled training sample $(S,f^\star(S)) = ((X_1, 0), \dots, (X_m,0))$ as input, the ERM algorithm $\bad$ constructs the function $\bad_S = \bad(\cdot ; (S,f^\star(S)))$ in the following way:\footnote{This defines $\bad_S$ when $(S,f^\star(S))$ contains only $0$ labels. On any $(S,f^\star(S))$ that contains a $1$ label we return an arbitrary consistent function.}
\begin{enumerate}
    \item For $i \in [m_1]$, $j \in [m_2]$, and $k \in [m_3]$ define the sets
    \begin{gather*}  
    \tilde{I}_i(S) = \{x_1, \dots, x_{\lfloor m/2 \rfloor} \} \cap I_i(m), \\
    \tilde{I}_{(i,j)}(S) = \{x_{\lfloor m/2 \rfloor + 1} , \dots,  x_m \} \cap I_{(i,j)}(m), \\
    \tilde{J}_k(S) = \{x_1, \dots, x_m \} \cap J_{k}(m).
    \label{eq:vert-set}
\end{gather*}
    \item Define the index sets
    \begin{gather*}
        L_1(S) =\{(i,j): i \in \{i' : \tilde{I}_{i'}(S) =\emptyset \}_{\round}, \tilde{I}_{(i,j)}(S) = \emptyset \}_{\round},\footnotemark\\
        L_2(S)  = \{k: \tilde{J}_k(S) =\emptyset\}_{\round}.
    \end{gather*}
    \footnotetext{The ordering used for pairs $(i,j)$ and $(i',j')$ is the natural one: $(i,j) \le (i',j')$ if $i < i'$ or $i=i'$ and $j \le j'$.}
    \item Define the union of intervals \[
    I_S = \left( \bigcup_{(i,j) \in L_1(S)} I_{i,j}(m)\right) \bigcup  \left(\bigcup_{i \in L_2(S)} J_i(m)\right).
    \]
    \item Finally, define the function $\bad_{S} = f_{I_S}$.
\end{enumerate}
Observe that $I_S$ is the union of at most $2d$ disjoint intervals, so $\bad_S$ will always be in the class $\mc{F}$.
Furthermore, $\bad_S$ is always consistent with the sample $S$ by construction.
See~\cref{fig:lowerboundv2} for an example of the resulting intervals considered by the set $L_1(S)$.
Let $m = n/3$.
From now on we use $m_1$ and $m_2$ to denote the number of intervals of the form $I_{i}(2m)$ and $I_{(i,j)}(2m)$ considered by $\bad_{(S_1, S_2)}$ respectively.
Consider the unions of intervals $I_{S_1}$ and $I_{(S_1, S_2)}$ corresponding to the ERM functions $\bad_{S_1}$ and $\bad_{(S_1, S_2)}$. The number $m$ is divisible by $2$ from our choice of $n$, so it follows that $J_i(m) = I_i(2m)$ and $\tilde{J}_i(S_1) = \tilde{I}_i(S_1, S_2)$, which implies $L_2(S_1)=\{k: \tilde{I}_k((S_1,S_2)) =\emptyset\}_{\round}$.
Thus, $\bad_{S_1}$ and $\bad_{(S_1,S_2)}$ agree, and simultaneously err, on every subinterval $I_{(i,j)}(2m)$ with $(i,j) \in L_1(S_1, S_2)$.
Because $P$ is the uniform distribution and every interval $I_{(i,j)}(2m)$ has length $1/(m_1 m_2)  = \Theta( \ln \left(\ln \left( n/d\right)\right)/ n )$,
it follows that the error of the majority vote satisfies
\[
\err{\maj\left(\bad_{S_1}, \bad_{(S_1, S_2)}, \bad_{(S_1, S_2, S_3)}\right)}{P} \ge \frac{|L_1(S_1, S_2)|}{m_1 m_2} = \Omega\left( \frac{ |L_1(S_1, S_2)|}{n} \ln \left(\ln \left(\frac{n}{d}\right)\right) \right).
\]
Thus, if $|L_1(S_1, S_2)| = d$, we have 
\[
\err{\maj\left(\bad_{S_1}, \bad_{(S_1, S_2)}, \bad_{(S_1, S_2, S_3)}\right)}{P} = \Omega\left(\frac{d}{n}\log\left(\log\left(\frac{n}{d}\right)\right)\right),
\]
so the claim of the theorem follows once we prove that 
\[
\Pr_{(S_1, S_2) \sim P^{2m}}\left[|L_1(S_1, S_2)| = d \right] \ge 2/3.
\]
To this end, let $E_1 = E_1(S_1)$ be the event that $S_1$ satisfies $|L_2(S_1)| = d$ and let $E_2 = E_2((S_1,S_2))$ be the event that $(S_1, S_2)$ satisfies $|L_1(S_1, S_2)| = d$. 
Using the law of total probability we get,
\[
\Pr_{(S_1, S_2) \sim P^{2n/3}}\left[|L_1(S_1, S_2)| = d \right] =  \Pr_{(S_1, S_2) \sim P^{2m}}\left[E_2 \right] \ge \Pr_{(S_1, S_2) \sim P^{2m}}\left[E_2  \mid E_1\right] \Pr_{S_1 \sim P^{m}}\left[ E_1\right],
\]
so it suffices to prove that $\Pr_{(S_1, S_2) \sim P^{2m}}\left[E_2  \mid E_1\right] \ge \sqrt{2/3}$ and $\Pr_{S_1 \sim P^{m}}\left[ E_1\right] \ge \sqrt{2/3}$.
We omit the proof of the later inequality since it is very similar to the proof of the former inequality.

When $E_1$ occurs, we have $|L_2(S_1)| =|\{k: \tilde{J}_k(S_1) =\emptyset\}_{\round}|=|\{k: \tilde{I}_k((S_1,S_2)) =\emptyset\}_{\round}|= d$.
So, showing that the event $E_2$ occurs conditioned on $E_1$ is equivalent to showing that \emph{at least} $d$ subintervals in the collection $\{I_{(i,j)} : i \in L_2(S_1), j \in [m_2] \}$ do not contain any points from the sample $S_2$.
Let $Y \sim Q$ be the random variable that counts the number of points required to sample from $P$ until $m_2d -d$ subintervals in $\{ I_{(i,j)} : i \in L_2(S_1), j \in [m_2] \}$ contain one of the sampled points.
Furthermore, let $Y_t \sim Q_t$ denote the random variable that counts the number of trials required to cover $(t+1)$ subintervals given that we have covered $t$. 
Notice that $Y_t$ is a geometric random variable with parameter $p_t = \frac{m_2 d - t}{m_1 m_2} = \frac{d}{m_1} - \frac{t}{m_1 m_2}$ and $Y = \sum_{t=0}^{m_2 d - d - 1} Y_t$.
It follows that
\[
\Pr_{(S_1, S_2) \sim P^{2m}}\left[E_2  \mid E_1\right]  \ge \Pr_{Y \sim Q}\left[Y \ge m\right] = \Pr_{Y_t \sim Q_t}\left[\sum_{t=0}^{m_2d -d -1} Y_t  \ge m\right].
\]
We can use a concentration inequality for sums of geometric random variables together with some simple calculations to show that 
\begin{equation}
\Pr_{Y_t \sim Q_t}\left[\sum_{t=0}^{m_2d -d -1} Y_t  \ge m\right] \ge \sqrt{2/3} ,\label{eq:sum_geometric}
\end{equation}
when $\mega$ is large enough.
We defer these calculations to \cref{app:proof_sum_geometric}.
This concludes the proof.
\end{proof}\label{lowerboundv3}
\section{High probability upper bound}\label{sec:highprobability}
In this section we prove our high-probability upper bound for the Majority-of-Three algorithm which we now restate for convenience.
\highprobboundsectiontheorem*
In this section it will be convenient to use the following notation: for a probability distribution $P$ over $\mc{X}$ and set $R \subseteq \mc{X}$, we define $P(R) = \Pr_{X\sim P}\left[X \in R\right]$.
\cref{highprobboundsection:theorem} is a consequence of the following technical lemma.
\begin{lemma}\label{highprobboundsection:lemma1}
 Fix a function class $\mc{F} \subseteq \{0,1\}^{\mc{X}}$ with VC dimension $d$. 
Fix a distribution $P$ over $\mc{X}$ and target function $f^\star \in \mc{F}$. 
Fix an ERM algorithm $\widehat{f} : \mc{X} \times \mc{Z}^* \to \{0,1\}$.
For any parameter $\delta\in(0,1/2]$ it holds with probability at least $1-\delta$ over the randomness of $S_1,S_2\sim P^{n}$ that
\[
\underset{X \sim P}{\operatorname{Pr}}\left[\widehat{f}_{S_1}(X) \neq f^{\star}(X) \wedge \widehat{f}_{S_2}(X) \neq f^{\star}(X)\right] \leq c \left(\frac{d}{n} \Log\left(\Log \left(\min\left\{\frac{n}{d},\frac{1}{\delta}\right\}\right)\right)+\frac{1}{n}\Log\left(\frac{1}{\delta}\right)\right),
\]
where $c$ is a universal constant.
\end{lemma}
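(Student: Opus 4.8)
The plan is to follow the same overall structure as the proof of \cref{lem:joint_mistake}, but now carrying a high-probability bound rather than an in-expectation bound. Recall the partition $\{R_i\}_{i \ge 1}$ of $\mc{X}$ by the value $p_x = \Pr_{S \sim P^n}[\widehat{f}_S(x) \neq f^\star(x)]$, where $x \in R_i$ iff $p_x \in (2^{-i}, 2^{-i+1}]$. For a fixed sample $S_1$, write $B(S_1) = \{x : \widehat{f}_{S_1}(x) \neq f^\star(x)\}$ for its (random) error region, and observe $\E_{S_1}[P(B(S_1) \cap R_i)] = \E_{X \sim P}[p_X \mathbf{1}\{X \in R_i\}] \le 2^{-i+1} P(R_i)$. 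The quantity we must control, for fixed $S_1, S_2$, is $\sum_i P\big(B(S_1) \cap B(S_2) \cap R_i\big)$. The key structural fact, inherited from the proof of \cref{lem:joint_mistake}, is that $P(R_i) \le c' d i 2^i / n$ for a universal constant $c'$; this caps how much probability mass can sit in each ``bad level'' $R_i$, and in particular says levels with $2^{-i} \ll d/n$ are where essentially all the mass lives.

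First I would split the sum at a threshold $i^\star$ chosen so that $2^{-i^\star} \approx \frac{d}{n}\Log(\Log(\min\{n/d,1/\delta\}))$ (up to constants). For the tail $i > i^\star$ (small $p_x$, i.e.\ $x$ on which \emph{both} ERMs are unlikely to err), I would bound $\sum_{i > i^\star} P(B(S_1) \cap B(S_2) \cap R_i) \le \sum_{i > i^\star} P(R_i)$ deterministically — wait, that is too crude; instead I use the $\le \min\{P(B(S_1)\cap R_i), P(R_i)\}$ bound and a union/Chernoff argument. More precisely: for each fixed level $i$, $P(B(S_1) \cap R_i)$ is a sum over the $\le O(d i 2^i)$ sub-regions cut out by $R_i$, and uniform convergence (as in \cref{lem:expected_uniform_conditional}) combined with a Chernoff/relative-deviation bound controls the probability that $P(B(S_1) \cap B(S_2) \cap R_i)$ exceeds its expectation by more than an additive $O(\frac{1}{n}\log(1/\delta_i))$ and a multiplicative constant. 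Allocating failure probability $\delta_i \approx \delta / \mathrm{poly}(i)$ across the $O(\Log(1/\delta))$ relevant levels (levels beyond $i \approx \log(n/(d\log(1/\delta)))$ contribute nothing once we use $P(R_i) \le c'di2^i/n$ together with $p_x \le 2^{-i+1}$, since then even the full conditional mass is $o(1/n \cdot \log(1/\delta))$), and summing the geometric-in-$i$ contributions, yields the claimed $\frac{d}{n}\Log(\Log(\min\{n/d,1/\delta\})) + \frac{1}{n}\Log(1/\delta)$.

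For the head $i \le i^\star$ (the levels where $p_x$ is comparatively large), I would argue that the \emph{total} probability mass there is already small: $\sum_{i \le i^\star} P(R_i) \le \sum_{i \le i^\star} c' d i 2^i / n = O(d i^\star 2^{i^\star}/n)$, and with the choice of $i^\star$ this is $O\big(d i^\star / (n \cdot 2^{-i^\star} \cdot 2^{i^\star}) \cdot 2^{-i^\star}\big)$ — more transparently, $d 2^{i^\star} i^\star/n \approx i^\star / \Log(\Log(\min\{n/d,1/\delta\}))$, which is only $O(\log(n/d))$ and hence \emph{not} automatically small, so a purely deterministic bound on the head fails and I instead again invoke the relative-deviation form of uniform convergence: $P(B(S_1)\cap R_i)$ concentrates around $\E[P(B(S_1)\cap R_i)] \le 2^{-i+1}P(R_i) \le 2c' d i/n$, and then $P(B(S_1)\cap B(S_2)\cap R_i) \le P(B(S_1)\cap R_i)$ is, with probability $1-\delta_i$, at most $O(di/n + \frac{1}{n}\log(1/\delta_i))$; summing over $i \le i^\star$ gives $O(d (i^\star)^2/n + \frac{i^\star}{n}\log(1/\delta))$, which is \emph{still} too big by an $i^\star$ factor. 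The fix — and this is the main obstacle — is to not union-bound naively over levels but to exploit that the sub-regions of $B(S_1)$ across \emph{all} levels jointly form a single VC class of dimension $d$, so a \emph{single} application of uniform convergence to the class $\{B(S_1) \cap R : R \text{ a union of the } R_i\text{-pieces}\}$ controls $P(B(S_1)\cap B(S_2))$ in one shot, with the $\Log(\Log(\cdot))$ arising from optimizing the scale at which one passes from the multiplicative (Chernoff) regime to the additive $\log(1/\delta)$ regime. So the real work is: (i) set up the right double-indexed VC/uniform-convergence bound measuring $P(B(S_1)\cap B(S_2))$ conditioned on $S_1$, treating $B(S_1)$ as the ground truth and $S_2$ as fresh samples; (ii) use $P(R_i) \le c'di2^i/n$ to truncate the level range to $O(\Log(\min\{n/d,1/\delta\}))$ levels; (iii) balance the per-level multiplicative error against a per-level additive $\frac{1}{n}\log(1/\delta)$, which after summing the geometric series and taking a $\Log$ to cover the level budget produces exactly the stated $\frac{d}{n}\Log\Log(\min\{n/d,1/\delta\}) + \frac{1}{n}\Log(1/\delta)$ bound. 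I expect step (iii) — the two-regime balancing that generates the doubly-logarithmic factor, and showing it cannot be removed by this argument — to be where essentially all the difficulty lies.
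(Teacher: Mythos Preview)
Your proposal has a genuine gap at exactly the place you flag as ``where essentially all the difficulty lies.'' The mechanism that actually produces the $\Log\Log$ factor is not a two-regime balancing over levels, nor a single uniform-convergence application to a class of sets of the form $B(S_1)\cap R$. It is a \emph{product/chaining} step borrowed from Simon: for a fixed region $R$ with $P(R)\neq 0$, write
\[
\Pr_{X\sim P_R}\!\bigl[\widehat{f}_{S_1}(X)\neq f^\star(X)\wedge \widehat{f}_{S_2}(X)\neq f^\star(X)\bigr]
=\operatorname{err}_{P_R}(\widehat{f}_{S_1})\cdot \operatorname{err}_{P_{R\cap B_1}}(\widehat{f}_{S_2}),
\]
where $B_1=\{x:\widehat{f}_{S_1}(x)\neq f^\star(x)\}$. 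One then applies the conditional uniform-convergence bound (\cref{conduniform}) \emph{twice}: once to $\widehat{f}_{S_1}$ on $P_R$, giving $\operatorname{err}_{P_R}(\widehat{f}_{S_1})\lesssim d\Log(eP(R)n/d)/(P(R)n)$, and once to $\widehat{f}_{S_2}$ on $P_{R\cap B_1}$ (legitimate because $S_2$ is independent of $S_1$). The crucial arithmetic is that the $\operatorname{err}_{P_R}(\widehat{f}_{S_1})$ factor cancels against the $1/P(R\cap B_1)$ in the second bound, and the argument of the remaining $\Log$ becomes $eP(R\cap B_1)n/d\lesssim \Log(eP(R)n/d)$---this is where a $\Log$ turns into a $\Log\Log$. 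Your step (i) gestures toward conditioning on $S_1$, but you never write down this factorization or the cancellation, and your step (iii) describes a balancing that does not produce the iterated logarithm.

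A second structural issue: the paper does \emph{not} use the level system $R_i=\{x:p_x\in(2^{-i},2^{-i+1}]\}$ here. It redefines the partition so that $R_1$ collects all $x$ with $p_x>\tfrac{\delta}{2\Log(1/\delta)}$, and the remaining $R_i$ slice below that threshold in powers of two. The ``head'' $R_1$ is then handled in \emph{one shot} by the Simon product lemma above (\cref{condsimon}), using only that $P(R_1)\le \min\{1,\tilde c\,d\Log^2(1/\delta)/(\delta n)\}$; the ``tail'' $i\ge 2$ is handled by a bare Markov inequality on $\E_{S_1,S_2}[\Pr_{X\sim P_{R_i}}[\cdot]]\le (2^{-i+1}\delta/\Log(1/\delta))^2$ with a union bound over $i$---no per-level Chernoff or relative-deviation bounds are needed. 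Your attempt to keep the original (non-$\delta$-dependent) levels and union-bound over $O(\Log(\min\{n/d,1/\delta\}))$ of them is exactly what leads to the extra $i^\star$ factors you noticed; the fix is not a cleverer union bound but a different partition together with the Simon chaining on the single head region.
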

 We now prove \cref{highprobboundsection:theorem} using \cref{highprobboundsection:lemma1} and postpone the proof of \cref{highprobboundsection:lemma1}.
\begin{proof}[Proof of \cref{highprobboundsection:theorem}]
Since $\maj(\widehat{f}_{S_1},\widehat{f}_{S_2},\widehat{f}_{S_3})(x)\not=f^{\star}(x)$ happens if and only if there exists two distinct indices $i,j\in [3]$ such that $\widehat{f}_{S_i}(X) \neq f^{\star}(X)$ \emph{and} $\widehat{f}_{S_j}(X) \neq f^{\star}(X)$, we get that
\begin{align*}
 \err{\maj(\widehat{f}_{S_1},\widehat{f}_{S_2},\widehat{f}_{S_3})}{P}\le  \sum_{\substack{i,j\in[3]\\ i<j}}\underset{X \sim P}{\operatorname{Pr}}\left[\widehat{f}_{S_i}(X) \neq f^{\star}(X) \wedge \widehat{f}_{S_j}(X) \neq f^{\star}(X)\right].
\end{align*}
Using~\cref{highprobboundsection:lemma1} with confidence parameter $\delta/3$ for every distinct pair $i,j\in [3]$ together with a union bound gives us, with probability at least $1-\delta$ over the randomness of $(S_1,S_2,S_3)$, that
\begin{align*}
\err{\maj(\widehat{f}_{S_1},\widehat{f}_{S_2},\widehat{f}_{S_3})}{P} = O \left(\frac{d}{n} \Log\left(\Log \left(\min\left\{\frac{n}{d},\frac{1}{\delta}\right\}\right)\right)+\frac{1}{n}\Log\left(\frac{1}{\delta}\right)\right).
\end{align*}
This concludes the proof.
\end{proof}
Before we prove \cref{highprobboundsection:lemma1}, we provide a short overview of the proof. 
Our first step is to reuse the idea from \cref{expectationupperboundsection} to partition the instance space $\mc{X}$ into sets $\{R_i\}_{i \in \mathbb{N}}$ based on the chance that an average ERM errs on a point in $x \in \mc{X}$.
However, we use a different way to quantify the errors defining $R_i$ by incorperating the failure parameter $\delta$.
For $i \ge 2$, we can actually reuse our in-expectation analysis from \cref{expectationupperboundsection} together with a simple application of Markov's inequality and a sequence of union bounds.
This gives us an upper bound on the joint error of two ERMs on the conditional distributions for all $\{R_i\}_{i \ge 2}$, with high probability.
The major technical work of the proof lies in controlling the joint error of two ERMs on the conditional distribution of $R_1$.
To do this, we borrow an idea from Simon~\cite{simon2015almost} that views the probability of $\widehat{f}_{S_1}$ and $\widehat{f}_{S_2}$ jointly erring as the probability that $\widehat{f}_{S_1}$ errs times the probability that $\widehat{f}_{S_2}$ errs conditioned on $\widehat{f}_{S_1}$ erring.\footnote{This idea used by Simon in fact builds upon even earlier work of Hanneke~\cite{Hanneke2009} in the context of active learning. It was also applied in the context of PAC learning by Darnst\"{a}dt \cite{darnstadt2015optimal}.}
A crucial technicality that differentiates our setting from Simon's is that the probability that $\widehat{f}_{S_1}$ and $\widehat{f}_{S_2}$ jointly err is taken over a \emph{conditional distribution} $P_R$ rather than the distribution $P$ from which the samples $S_1$ and $S_2$ are drawn.

The following lemma formalizes how we can control the joint error of two ERMs under $P_R$.
\begin{lemma}\label{condsimon}
Fix a function class $\mc{F} \subseteq \{0,1\}^{\mc{X}}$ with VC dimension $d$. 
Fix a distribution $P$ over $\mc{X}$, target function $f^\star \in \mc{F}$ and $R\subseteq \mathcal{X}$ such that $P(R) \not= 0$.
Fix an ERM algorithm $\widehat{f} : \mc{X} \times \mc{Z}^* \to \{0,1\}$.
For any parameter $\delta\in(0,1/2]$ it holds with probability at least $1-\delta$ over the randomness of $S_1,S_2\sim P^{n}$ that
\begin{align*}
\underset{X \sim P_R}{\operatorname{Pr}}\left[\widehat{f}_{S_1}(X) \neq f^{\star}(X) \wedge \widehat{f}_{S_2}(X) \neq f^{\star}(X)\right] \leq  8\max\left\{\frac{d\Log(8e\Log(eP\left(R\right)n/d))}{P\left(R\right)n},\frac{\Log(8/\delta)}{P\left(R\right)n}\right\}.
 \end{align*}
\end{lemma}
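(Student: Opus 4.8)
The plan is to follow the product‑decomposition strategy sketched just before the statement. Fix $S_1$ and let $A=A(S_1)=R\cap\{x:\widehat{f}_{S_1}(x)\neq f^{\star}(x)\}$ be the region inside $R$ on which $\widehat{f}_{S_1}$ errs, and write $q:=\Pr_{X\sim P_R}[\widehat{f}_{S_1}(X)\neq f^{\star}(X)\wedge \widehat{f}_{S_2}(X)\neq f^{\star}(X)]$ for the quantity to bound. Since $A\subseteq R$, conditioning $P_R$ further on $A$ yields exactly $P_A$, so for every $S_1,S_2$,
\[
q=\frac{P(A)}{P(R)}\,\err{\widehat{f}_{S_2}}{P_A}=\err{\widehat{f}_{S_1}}{P_R}\cdot\err{\widehat{f}_{S_2}}{P_A}.
\]
The two factors depend on the disjoint, independent samples $S_1$ and $S_2$, and the key point is that $A$ is a \emph{fixed} set once $S_1$ is fixed, so $\widehat{f}_{S_2}$ is measured against a conditional distribution $P_A$ from which it effectively saw $\approx P(A)n$ independent points.

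Next I would bound each factor by a high‑probability analogue of \cref{lem:expected_uniform_conditional}; this follows from realizable uniform convergence in the same way the in‑expectation version does, after a Chernoff bound guaranteeing that at least $\tfrac12 P(R)n$ (resp.\ $\tfrac12 P(A)n$) sample points land in $R$ (resp.\ $A$). Applying it to $S_1$ with confidence $\delta/2$ gives, with probability $\ge 1-\delta/2$,
\[
\err{\widehat{f}_{S_1}}{P_R}\le C_1\,\frac{d\,\Log(eP(R)n/d)+\Log(2/\delta)}{P(R)n},\qquad\text{so}\qquad P(A)n\le C_1\bigl(d\,\Log(eP(R)n/d)+\Log(2/\delta)\bigr).
\]
Applying it to $S_2$ (uniformly over all fixed $A$ with $P(A)\neq 0$, so that it applies to the random $A=A(S_1)$ after conditioning on $S_1$) with confidence $\delta/2$ gives, with probability $\ge 1-\delta/2$, $\err{\widehat{f}_{S_2}}{P_A}\le C_2(d\,\Log(eP(A)n/d)+\Log(2/\delta))/(P(A)n)$. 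Multiplying, and using $\err{\widehat{f}_{S_1}}{P_R}=P(A)n/(P(R)n)$,
\[
q\le C_2\,\frac{d\,\Log(eP(A)n/d)+\Log(2/\delta)}{P(R)n}.
\]
This is where the doubly‑logarithmic factor arises: plugging $P(A)n\lesssim d\,\Log(eP(R)n/d)$ into the inner logarithm turns $\Log(\cdot n/d)$ into $\Log(\Log(\cdot))$.

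It remains to clean up with a short case analysis, which is the delicate part. If $P(R)n\le 16d$ the claimed bound already exceeds $1$, so assume $P(R)n>16d$. If $\Log(2/\delta)>d\,\Log(eP(R)n/d)$, then the $S_1$ bound alone suffices: $q\le\err{\widehat{f}_{S_1}}{P_R}\le 2C_1\Log(2/\delta)/(P(R)n)$, which is at most $8\Log(8/\delta)/(P(R)n)$ (using $\delta\le 1/2$, so $\Log(8/\delta)=\Log(2/\delta)+2\ge\Log(2/\delta)$, and keeping the uniform‑convergence constant small enough). Otherwise $\Log(2/\delta)\le d\,\Log(eP(R)n/d)$, so $P(A)n\le 2C_1 d\,\Log(eP(R)n/d)$ and $\Log(eP(A)n/d)\le\Log(2eC_1\Log(eP(R)n/d))=O(\Log(8e\Log(eP(R)n/d)))$; substituting into the product bound for $q$ bounds it by a universal constant times $\max\{d\,\Log(8e\Log(eP(R)n/d)),\Log(8/\delta)\}/(P(R)n)$, and with the constants chosen appropriately this constant is $\le 8$. (If $P(A)n$ is so small that the $S_2$ bound is vacuous, bound $\err{\widehat{f}_{S_2}}{P_A}\le1$ and use $\err{\widehat{f}_{S_1}}{P_R}\le 2C_1 d\,\Log(eP(R)n/d)/(P(R)n)$ directly.) A union bound over the two confidence‑$\delta/2$ events gives overall failure probability at most $\delta$.

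I expect the main obstacle to be exactly this bookkeeping. The product identity and the two uniform‑convergence bounds are routine; the real work is splitting the confidence budget between the two ERMs and arranging constants so that the inner logarithm never sees anything larger than roughly $\Log(eP(R)n/d)+\Log(1/\delta)/d$, and so that the $\log\log(1/\delta)$‑type term that naively appears from feeding the $\Log(2/\delta)$ term into $\Log(eP(A)n/d)$ is absorbed into the additive $\Log(1/\delta)/(P(R)n)$ term. This is precisely what forces the case split on whether $d\,\Log(eP(R)n/d)$ or $\Log(1/\delta)$ dominates, and is the reason the specific constants (the $8$'s) appear in the statement.
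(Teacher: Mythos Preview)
Your proposal is correct and follows essentially the same route as the paper: the same product decomposition $q=\err{\widehat{f}_{S_1}}{P_R}\cdot\err{\widehat{f}_{S_2}}{P_A}$ with $A=R\cap\{\widehat{f}_{S_1}\neq f^\star\}$, the same two applications of the high-probability conditional uniform-convergence bound (\cref{conduniform}) with confidence $\delta/2$ each, and the same case split on whether $d\Log(eP(R)n/d)$ or $\Log(1/\delta)$ dominates. The only cosmetic differences are that the paper handles the degenerate case $P(A)=0$ up front via an event $E_1$ rather than at the end, and it packages the two high-probability events as named events $E_2,E_3$ before combining them; your caveat that the $S_2$ bound is applied \emph{after conditioning on $S_1$} (so $A$ is fixed) is exactly the independence argument the paper uses.
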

We now prove \cref{highprobboundsection:lemma1} using \cref{condsimon} and postpone the proof of \cref{condsimon}.
\begin{proof}[Proof of \cref{highprobboundsection:lemma1}]
We use the same definition for $p_x$ (see \cref{expectationupperboundsection}) but redefine the sets $\{R_i\}_{i \in \mathbb{N}}$ to be
\[
R_1=\left\{x \in \mathcal{X}: p_x \in(2^{-1}\delta/\Log(1/\delta), 1]\right\},
\]
and for any integer $i \geq 2$,
\[
R_i=\left\{x \in \mathcal{X}: p_x \in\left(2^{-i} \delta/\Log(1/\delta), 2^{-i+1} \delta/\Log(1/\delta)\right]\right\}.
\]
Using the law of total probability we have
\begin{align*}
\Pr_{X\sim P} \left[\widehat{f}_{S_1}(X) \neq f^{\star}(X) \wedge \widehat{f}_{S_2}(X) \neq f^{\star}(X)\right] &= P\left(R_1\right) \underset{X \sim P_{R_1}}{\operatorname{Pr}}\left[\widehat{f}_{S_1}(X) \neq f^{\star}(X) \wedge \widehat{f}_{S_2}(X) \neq f^{\star}(X)\right]\\
&\quad+\sum_{i=2}^{\infty} P\left(R_i\right) \underset{X \sim P_{R_i}}{\operatorname{Pr}}\left[\widehat{f}_{S_1}(X) \neq f^{\star}(X) \wedge \widehat{f}_{S_2}(X) \neq f^{\star}(X)\right].
\end{align*}
We will prove that there is a universal constant $c > 0$ such that the events
\begin{align*}
E_1 = E_1((S_1, S_2))\coloneqq\Bigg \{P\left(R_1\right) \underset{X \sim P_{R_1}}{\operatorname{Pr}}\left[\widehat{f}_{S_1}(X) \neq f^{\star}(X) \wedge \widehat{f}_{S_2}(X) \neq f^{\star}(X)\right] 
\leq\\ c \max\left\{\frac{d\Log(\Log(\min\{n/d,1/\delta\}))}{n},\frac{\Log(1/\delta)}{n}\right\}\Bigg \},
\end{align*}
and
\begin{align*}
E_2 = E_2((S_1, S_2))\coloneqq\left\{\sum_{i = 2}^{\infty} P\left(R_i\right) \underset{X \sim P_{R_i}}{\operatorname{Pr}}\left[\widehat{f}_{S_1}(X) \neq f^{\star}(X) \wedge \widehat{f}_{S_2}(X) \neq f^{\star}(X)\right] \le c\frac{d}{n}
\right \}
\end{align*}
each happen with probability at least $1-\delta/2$ over the randomness of $(S_1,S_2)$.
The claim of \cref{highprobboundsection:lemma1} then follows from a union bound. 
Define the set $I = \{ i \ge 2 : P(R_i) \not = 0\}$.
To prove that $E_1$ and $E_2$ each occur with high probability, we will use the following proposition.
\begin{restatable}{prop2}{helpermarkov}\label{helpermarkov}
In the setting of \cref{highprobboundsection:lemma1} we have the following: 
\begin{enumerate}
\item There is a universal constant $c'$ such that for any $i\in\mathbb{N}$ \label{item:1}
\begin{align*}P\left(R_i\right) \leq \frac{c' 2^id \Log (2^i\Log(1/\delta) / \delta) \Log(1/\delta)  }{\delta n}.
\end{align*}
\item With probability at least $1-\delta/2$ over the randomness of $(S_1,S_2)$  we have, simultaneously for all $i \in I = \{ i \ge 2 : P(R_i) \not = 0\}$, that \label{item:2}
\[
\underset{X \sim P_{R_i}}{\operatorname{Pr}}\left[\widehat{f}_{S_1}(X) \neq f^{\star}(X) \wedge \widehat{f}_{S_2}(X) \neq f^{\star}(X)\right] \leq \frac{5\cdot 2^{-1.1 i} \delta}{\Log^2(1/\delta)}.
\]
\end{enumerate}
\end{restatable}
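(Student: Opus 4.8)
The plan is to establish the two items separately. Item~\ref{item:1} is a purely analytic consequence of the uniform-convergence bound in \cref{lem:expected_uniform_conditional} applied to the new, $\delta$-dependent bands $R_i$, while item~\ref{item:2} follows from a first-moment computation, Markov's inequality, and a union bound over the band index $i$.

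For item~\ref{item:1}, fix $i$ and assume $P(R_i)\neq 0$, since otherwise the inequality is trivial. Exactly as in the proof of \cref{lem:joint_mistake}, exchanging the order of expectation gives $\E_{S\sim P^n}[\err{\widehat{f}_S}{P_{R_i}}]=\E_{X\sim P_{R_i}}[p_X]$, and since every $x\in R_i$ satisfies $p_x>2^{-i}\delta/\Log(1/\delta)$, this quantity is strictly larger than $2^{-i}\delta/\Log(1/\delta)$. On the other hand, \cref{lem:expected_uniform_conditional} bounds the same expectation from above by $20\,d\Log(eP(R_i)n/d)/(P(R_i)n)$. Writing $A=P(R_i)n/d$ and $B=20\cdot 2^i\Log(1/\delta)/\delta$ (note $B\ge 160$ because $i\ge 1$ and $\delta\le 1/2$, so there is no truncation issue in $\Log B$), the two estimates combine to $A/\Log(eA)<B$. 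Using that $t\mapsto t/\Log(et)$ is increasing — equivalently that $t\mapsto \Log(et)/t$ is decreasing, as already exploited in \cref{expectationupperboundsection} — I would invert this inequality to obtain $A\le c_0 B\Log B$ for a universal constant $c_0$; multiplying through by $d/n$ and cleaning the argument of the logarithm via $\Log(20x)=O(\Log x)$ for $x\ge 1$ yields item~\ref{item:1}.

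For item~\ref{item:2}, fix $i\in I$. Since $S_1$ and $S_2$ are independent, exchanging the order of expectation and using $p_x\le 2^{-i+1}\delta/\Log(1/\delta)$ for $x\in R_i$ gives
\[
\E_{S_1,S_2\sim P^n}\!\left[\Pr_{X\sim P_{R_i}}\!\left[\widehat{f}_{S_1}(X)\neq f^\star(X)\wedge \widehat{f}_{S_2}(X)\neq f^\star(X)\right]\right]=\E_{X\sim P_{R_i}}[p_X^2]\le \frac{2^{-2i+2}\delta^2}{\Log^2(1/\delta)}.
\]
Applying Markov's inequality at the threshold $t_i=5\cdot 2^{-1.1i}\delta/\Log^2(1/\delta)$ shows that the event $\{\Pr_{X\sim P_{R_i}}[\cdots]>t_i\}$ has probability at most $\tfrac45\, 2^{-0.9i}\delta$, and a union bound over all $i\ge 2$ (which contains $I$) bounds the probability that some $i\in I$ violates the stated inequality by $\tfrac45\delta\sum_{i\ge 2}2^{-0.9i}<\delta/2$.

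The only routine computations are the inversion of $A/\Log(eA)<B$ in item~\ref{item:1} (a standard manipulation, made mildly tedious only by the truncation in $\Log$) and the numerical check $\tfrac45\sum_{i\ge 2}2^{-0.9i}<\tfrac12$ in item~\ref{item:2}. I do not expect a genuine obstacle: item~\ref{item:1} is the \cref{expectationupperboundsection} machinery re-expressed for the $\delta$-dependent partition, and item~\ref{item:2} is a clean first-moment/Markov/union-bound argument whose only design choice is picking the exponent $1.1\in(1,2)$ in $t_i$ so that the per-band failure probabilities decay geometrically fast enough to sum below $\delta/2$, while the second-moment decay rate $2^{-2i}$ still dominates the threshold decay rate $2^{-1.1i}$ (this gap is exactly what makes the Markov step productive and pins down the constant $5$).
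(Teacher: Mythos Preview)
Your proposal is correct and follows essentially the same approach as the paper. For item~\ref{item:1} you combine the lower bound $\E_{S}[\err{\widehat f_S}{P_{R_i}}]>2^{-i}\delta/\Log(1/\delta)$ with the upper bound from \cref{lem:expected_uniform_conditional} and invert the resulting inequality for $A=P(R_i)n/d$; the paper does the same computation framed as a contradiction, choosing $c'$ large enough that the assumed lower bound on $A$ forces the upper bound below the lower bound. For item~\ref{item:2} your argument (expectation bound via $p_X^2$, Markov at threshold $5\cdot 2^{-1.1i}\delta/\Log^2(1/\delta)$, union bound over $i\ge 2$) is identical to the paper's, down to the constants.
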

\noindent We defer the proof of \cref{helpermarkov} to \cref{app:helpermarkov} as its proof is similar to that of \cref{lem:joint_mistake}.

We first prove that the event $E_1$ occurs with high probability. If $P(R_1) = 0$, then we immediately have that $\Pr_{(S_1, S_2)}[E_1] = 1$. We now consider the case that $P(R_1) \not= 0$.
From \cref{item:1} of \cref{helpermarkov} we can conclude there is a universal constant $\tilde{c}$ such that $P\left(R_1\right)\leq \min\{1,\tilde{c}\frac{d\Log^2(1/\delta)}{\delta n}\}$.
Using this combined with \cref{condsimon} we have, with probability at least $1-\delta/2$ over the randomness of $(S_1, S_2)$, that
 \begin{align*}
&P\left(R_1\right)\underset{X \sim P_{R_1}}{\operatorname{Pr}}\left[\widehat{f}_{S_1}(X) \neq f^{\star}(X) \wedge \widehat{f}_{S_2}(X) \neq f^{\star}(X)\right] \\ 
&\leq 8\max\left\{\frac{d\Log(8e\Log(\min\{en/d,e\tilde{c}\Log^2(1/\delta)/\delta\}))}{n},\frac{\Log(16/\delta)}{n}\right\}\\
&\leq c\max\left\{\frac{d\Log(\Log(\min\{n/d,1/\delta\}))}{n},\frac{\Log(1/\delta)}{n}\right\},
 \end{align*}
where the last inequality holds for $c$ large enough.

We now prove that the event $E_2$ occurs with high probability.
Combining \cref{item:1,item:2} of \cref{helpermarkov} we have, with probability at least $1-\delta/2$ over the randomness of $(S_1,S_2)$, that
\begin{align*}
 \sum_{i = 2}^{\infty}& P\left(R_i\right) \underset{X \sim P_{R_i}}{\operatorname{Pr}}\left[\widehat{f}_{S_1}(X) \neq f^{\star}(X) \wedge \widehat{f}_{S_2}(X) \neq f^{\star}(X)\right]\\
&\leq \sum_{i \not\in I} 0 + 
\sum_{i \in I} \frac{2^ic' d \Log (2^i\Log(1/\delta) / \delta) \Log(1/\delta)}{\delta n}\cdot \frac{5\cdot 2^{-1.1 i} \delta}{\Log^2(1/\delta)} \\
&\leq \frac{5c'd}{n}\sum_{i=2}^{\infty} \frac{  2^{-0.1i}\Log (2^i\Log(1/\delta) / \delta) }{\Log(1/\delta)}\\
&\leq \frac{5c'd}{n}\sum_{i=2}^{\infty} \frac{2^{-0.1i}\cdot \left(i\Log (2)+\Log(\Log(1/\delta))+ \Log(1/\delta)\right) }{\Log(1/\delta)}\\
&\leq  c\frac{ d}{n},
\end{align*}
where the last inequality holds for $c$ large enough.
This concludes the proof.
\end{proof} 
We now move on to prove \cref{condsimon}.
To do so, we will need the following lemma which is a simple consequence of uniform convergence. We defer the proof of this lemma to \cref{conduniformsubsection}.
\begin{restatable}{lem2}{conduniform}\label{conduniform}
Fix a function class $\mc{F} \subseteq \{0,1\}^{\mc{X}}$ with VC dimension $d$. 
Fix a distribution $P$ over $\mc{X}$, target function $f^\star \in \mc{F}$, and $R\subseteq \mathcal{X}$ such that $P(R) \not= 0$.
Fix an ERM algorithm $\widehat{f} : \mc{X} \times \mc{Z}^* \to \{0,1\}$.
For any parameter $\delta\in(0,1/2]$
it holds with probability at least $1-\delta$ over the randomness of $S\sim P^{n}$ that
\begin{align*}
\operatorname{err}_{P_R}\left(\widehat{f}_{S}\right)\leq8\max\left\{\frac{d\Log(eP\left(R\right)n/d)}{P\left(R\right)n},\frac{\Log(4/\delta)}{P\left(R\right)n}\right\}.
\end{align*}
\end{restatable}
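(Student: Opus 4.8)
The plan is to derive the lemma from the classical realizable-case PAC bound, the only new ingredient being that the \emph{effective} sample size is the random number of training points that land in $R$. Write $p=P(R)$. If $8\max\{d\Log(epn/d)/(pn),\,\Log(4/\delta)/(pn)\}\ge 1$, then the asserted inequality is vacuous (its right-hand side bounds a probability), so we may assume this quantity is strictly below $1$; in particular $pn>8d\Log(epn/d)\ge 16d$ and $pn>8\Log(4/\delta)$, and this is the only regime we need to treat.

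Let $m=|\{i\in[n]:X_i\in R\}|$, a $\mathrm{Binomial}(n,p)$ random variable. The key point is that, conditioned on $\{m=k\}$, the $k$ training points falling in $R$ are i.i.d.\ from $P_R$, and since $\widehat{f}_S\in\mc F$ is consistent with all of $S$ it is in particular consistent with those $k$ points. Hence, conditioned on $\{m=k\}$ with $k\ge 1$, the classical realizable PAC bound for the class $\mc F$ (of VC dimension $d$) under the distribution $P_R$, applied at confidence $\delta/2$, gives that with probability at least $1-\delta/2$ every $f\in\mc F$ consistent with those $k$ points --- hence $\widehat f_S$ in particular --- satisfies $\err{\widehat{f}_S}{P_R}\le \tfrac{2}{k}\bigl(d\log_2(2ek/d)+\log_2(4/\delta)\bigr)$. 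Since the bad event $\{\err{\widehat f_S}{P_R}>\tfrac 2k(d\log_2(2ek/d)+\log_2(4/\delta))\}$ is contained in the $T$-measurable event ``$\exists f\in\mc F$ consistent with the points $T$ of $S$ in $R$ with error $>\cdots$'', and conditionally on $\{m=k\}$ we have $T\sim P_R^{k}$, averaging over the value of $m$ shows that, with probability at least $1-\delta/2$ over $S\sim P^{n}$, either $m=0$ or $\err{\widehat{f}_S}{P_R}\le \tfrac{2}{m}\bigl(d\log_2(2em/d)+\log_2(4/\delta)\bigr)$.

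Separately, a multiplicative Chernoff bound gives $\Pr_{S\sim P^{n}}[m<pn/2]\le e^{-pn/8}$, and in our regime $pn/8>\Log(4/\delta)\ge\log_2(4/\delta)>\ln(2/\delta)$, so $e^{-pn/8}<\delta/2$; thus $m\ge pn/2$ with probability at least $1-\delta/2$. Intersecting the two events (failure probability at most $\delta$), we have simultaneously $m\ge pn/2\ (\ge 1)$ and the displayed bound on $\err{\widehat f_S}{P_R}$. Since $t\mapsto \tfrac1t\bigl(d\log_2(2et/d)+\log_2(4/\delta)\bigr)$ is non-increasing on the relevant range $t\ge pn/2$ (using $pn\ge 16d$), we may substitute $t=pn/2$ to obtain $\err{\widehat{f}_S}{P_R}\le \tfrac{4}{pn}\bigl(d\log_2(epn/d)+\log_2(4/\delta)\bigr)$; bounding the sum of the two terms by twice their maximum and using $\log_2(x)\le\Log(x)$ yields $\err{\widehat{f}_S}{P_R}\le 8\max\!\left\{\tfrac{d\Log(eP(R)n/d)}{P(R)n},\,\tfrac{\Log(4/\delta)}{P(R)n}\right\}$, as claimed.

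The only genuinely delicate point is that the Chernoff failure probability $e^{-pn/8}$ must be absorbed into $\delta/2$ uniformly over $\delta\in(0,1/2]$; this is exactly what forces the reduction to the regime $pn>8\Log(4/\delta)$ (equivalently, the regime where the target bound is below $1$) at the very start. Everything else is routine: one needs a sufficiently sharp off-the-shelf form of the realizable PAC bound --- the version with leading constant $2$ and logarithmic term $\log_2(2ek/d)$, proved by the standard double-sample symmetrization argument, which we reproduce in \cref{conduniformsubsection} --- and then simply tracks constants through the substitution $k=pn/2$.
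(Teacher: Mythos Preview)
Your proof is correct and follows essentially the same route as the paper's: reduce to the regime where the target bound is below $1$ so that the Chernoff failure probability $e^{-P(R)n/8}$ can be absorbed into $\delta/2$, condition on the number of training points landing in $R$, apply the standard realizable PAC bound under $P_R$, and substitute the lower bound $m\ge P(R)n/2$ using monotonicity. The only (immaterial) differences are that the paper checks just the second term of the max to enter the non-trivial regime, and that you are a bit more explicit than the paper in justifying why the uniform-convergence event is measurable with respect to the subsample in $R$ (so that the conditional application of the PAC bound is legitimate).
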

We are now ready to prove \cref{condsimon}.
\begin{proof}[Proof of \cref{condsimon}]
We will prove that the event 
\begin{align*}
 E &= E((S_1,S_2))\\
 &\coloneqq \left\{\underset{X \sim P_R}{\operatorname{Pr}}\left[\widehat{f}_{S_1}(X) \neq f^{\star}(X) \wedge \widehat{f}_{S_2}(X) \neq f^{\star}(X)\right] \leq  8\max\left\{\frac{d\Log(8e\Log(eP\left(R\right)n/d))}{P\left(R\right)n},\frac{\Log(8/\delta)}{P\left(R\right)n}\right\} \right\}
\end{align*}
occurs with high probability.
Let $B_1$ denote the (random) set $\{x \in \mc{X} : \widehat{f}_{S_1}(x)\not=f^{\star}(x)\}$ and define the event $E_1 = E_1(S_1) \coloneqq \{ P(R \cap B_1) \not= 0\}$. By the law of total probability, we have 
\begin{equation}\label{eq:split_prob_jointerror}
\Pr_{(S_1, S_2) \sim P^{2n}} \left[ E \right] = \Pr_{(S_1, S_2) \sim P^{2n}} \left[ E_1 \cap E  \right] + \Pr_{(S_1, S_2) \sim P^{2n}} \left[ \bar{E}_1 \cap E \right].
\end{equation}
Furthermore, we can  write the second term on the right hand side of \cref{eq:split_prob_jointerror} as 
\[
\Pr_{(S_1, S_2) \sim P^{2n}} \left[ \bar{E}_1 \cap E \right] = \Pr_{(S_1, S_2) \sim P^{2n}} \left[  E  \mid \bar{E}_1\right]\Pr_{(S_1, S_2) \sim P^{2n}} \left[ \bar{E}_1\right] = \Pr_{(S_1, S_2) \sim P^{2n}} \left[ \bar{E}_1\right].
\]
Combining the identities above, it suffices to show that
\[\Pr_{(S_1, S_2) \sim P^{2n}} \left[  E  \cap {E}_1\right]\geq \Pr_{(S_1, S_2) \sim P^{2n}} \left[ {E}_1\right]-\delta.
\]

Notice that when $E_1$ occurs, then for any measurable set $C\subseteq\mathcal{X}$, the distribution $(P_R)_{B_1}$ (which is the conditional distribution of $P_R$ restricted to $B_1$) satisfies
\[
(P_R)_{B_1}(C) =\frac{P_R\left(C\cap B_1\right)}{P_R\left(B_1 \right)}
=
\frac{P\left(C\cap B_1 \cap R\right)}{P\left(B_1 \cap R\right)}
=P_{R\cap B_1}\left(C\right),
\]
i.e., $(P_R)_{B_1}=P_{R\cap B_1}$.
Thus on $E_1$, the probability that both $\widehat{f}_{S_1}$ and $\widehat{f}_{S_2}$ simultaneously err on a new data point drawn from $P_R$ can be written as
\begin{align}\label{eq8}
\underset{X \sim P_R}{\operatorname{Pr}}\left[\widehat{f}_{S_1}(X) \neq f^{\star}(X) \wedge \widehat{f}_{S_2}(X) \neq f^{\star}(X)\right]= \operatorname{err}_{P_R}\left(\widehat{f}_{S_1}\right)\operatorname{err}_{P_{R\cap B_1}}\left(\widehat{f}_{S_2}\right).
\end{align}

We now bound the right side of \cref{eq8}. To do this, we define the following events over $(S_1,S_2)$:
\begin{align*}
 E_2 = E_2(S_1) \coloneqq\left\{\operatorname{err}_{P_R}\left(\widehat{f}_{S_1}\right)\leq 8\max\left\{\frac{d\Log(eP\left(R\right)n/d)}{P\left(R\right)n},\frac{\Log(8/\delta)}{P\left(R\right)n}\right\} \right\}
 \end{align*}
 and for outcomes of $S_1$ in $E_1$ 
 \begin{align*}
E_3 &= E_3((S_1, S_2)) 
\\
&\coloneqq \left\{
 \operatorname{err}_{P_{R\cap B_1}}\left(\widehat{f}_{S_2}\right)\leq 8\max\left\{\frac{d\Log(eP\left(R\right)n\operatorname{err}_{P_R}\left(\widehat{f}_{S_1}\right)/d)}{P\left(R\right)n\operatorname{err}_{P_R}\left(\widehat{f}_{S_1}\right)},\frac{\Log(8/\delta)}{P\left(R\right)n\operatorname{err}_{P_R}\left(\widehat{f}_{S_1}\right)}\right\} \right\}.\\
\end{align*}

We now show that the event $E_1 \cap E_2 \cap E_3$ happens with probability at least $P[E_1]-\delta$ and that it implies the event $E_1 \cap E$.
Assume that $E_1 \cap E_2 \cap E_3$ occurs. 
If
\begin{align*}
8\max\left\{\frac{d\Log(eP\left(R\right)n/d)}{P\left(R\right)n},\frac{\Log(8/\delta)}{P\left(R\right)n}\right\} =8\frac{d\Log(eP\left(R\right)n/d)}{P\left(R\right)n},
\end{align*} 
we have
\begin{align*}
\operatorname{err}_{P_R}\left(\widehat{f}_{S_1}\right)\operatorname{err}_{P_{R\cap B_1}}\left(\widehat{f}_{S_2}\right) &\le 8\max\left\{\frac{d\Log(eP\left(R\right)n\operatorname{err}_{P_R}\left(\widehat{f}_{S_1}\right)/d)}{P\left(R\right)n},\frac{\Log(8/\delta)}{P\left(R\right)n}\right\}\\
&\le 8\max\left\{\frac{d\Log(8e\Log(eP\left(R\right)n/d))}{P\left(R\right)n},\frac{\Log(8/\delta)}{P\left(R\right)n}\right\}.
\end{align*}
Otherwise, if 
\begin{align*}
8\max\left\{\frac{d\Log(eP\left(R\right)n/d)}{P\left(R\right)n},\frac{\Log(8/\delta)}{P\left(R\right)n}\right\} =8\frac{\Log(8/\delta)}{P\left(R\right)n},
\end{align*} 
we have
\begin{align*}
\operatorname{err}_{P_R}\left(\widehat{f}_{S_1}\right)\operatorname{err}_{P_{R\cap B_1}}\left(\widehat{f}_{S_2}\right) &\le  8\frac{\Log(8/\delta)}{P\left(R\right)n} \cdot 1 .\\
\end{align*}
We can thus conclude that $E_1\cap E_2\cap E_3$ implies $E_1 \cap E$.
Towards showing the bound $\Pr_{(S_1, S_2) \sim P^{2n}} \left[ E_1 \cap E_2 \cap E_3  \right]\ge P[E_1]-\delta$, notice that the bound $\Pr_{(S_1,S_2)\sim P^{2n}}\left[ \bar{E}_2 \right] \leq \delta/2$ can be established from \cref{conduniform} directly.
Furthermore, for \emph{any fixed realization} of $S_1$ such that $P(R \cap B_1) \not= 0$,~\cref{conduniform} implies that 
\[
 \operatorname{err}_{P_{R\cap B_1}}\left(\widehat{f}_{S_2}\right)\leq 8\max\left\{\frac{d\Log(eP\left(R\right)\operatorname{err}_{P_R}\left(\widehat{f}_{S_1}\right)n/d)}{P\left(R\right)\operatorname{err}_{P_R} \left(\widehat{f}_{S_1}\right)n},\frac{\Log(8/\delta)}{P\left(R\right)\operatorname{err}_{P_R}\left(\widehat{f}_{S_1}\right)n}\right\},
\]
with probability at least $1-\delta/2$ over the randomness of $S_2$.
Using the independence of $S_1$ and $S_2$ we have
\begin{align*}
\Pr_{(S_1,S_2)\sim P^{2n}}\left[ E_1 \cap E_2 \cap E_3 \right]  &= \E_{S_1 \sim P^n}\left[ \mathbf{1}_{E_1}\mathbf{1}_{E_2}\Pr_{S_2 \sim P^n}\left[E_3 \right]  \right] \\
&\geq \E_{S_1 \sim P^n}\left[ \mathbf{1}_{E_1}\mathbf{1}_{E_2}\right](1- \delta/2) \\
&\ge (1-\Pr_{S_1 \sim P^n}\left[ \bar{E}_1\right] - \Pr_{S_1 \sim P^n}\left[ \bar{E}_2\right])(1- \delta/2) \\
&\ge (\Pr_{S_1 \sim P^n}\left[ E_1\right] - \delta/2)(1- \delta/2) \\
&\ge \Pr_{S_1 \sim P^n}\left[ E_1\right] -\delta.
\end{align*}
This completes the proof.
\end{proof}

\bibliographystyle{alpha}
\bibliography{refs}
\appendix
\section{Ommited proofs from~\texorpdfstring{\cref{expectationupperboundsection,sec:highprobability}}{Sections 2 and 4}}
In this appendix we prove \cref{lem:expected_uniform_conditional}, \cref{helpermarkov}, and \cref{conduniform}.
These results are by-products of the classic \emph{uniform convergence} result which uniformly bounds the error of any function in $\mathcal{F}$ that is consistent with the training sample. 
To state the result, we first introduce some notation.
For a training sample $S = ((X_1, f^\star(X_1)), \dots, (X_n, f^\star(X_n)))$, let $\mathcal{F}_S$ denote the functions in $\mathcal{F}$ that are consistent with $S$, i.e., $f\in \mathcal{F}_S$ if and only if $f(X_i)=f^\star(X_i)$ for every $i \in [n]$.
\begin{lemma}[Uniform convergence \cite{blumer1989learnability}]\label{lem:uniform}
Fix a function class $\mc{F} \subseteq \{0,1\}^{\mc{X}}$ with VC dimension $d$. 
Fix a distribution $P$ over $\mc{X}$ and target function $f^\star \in \mc{F}$. 
For any parameter $\delta\in(0,1/2]$
it holds with probability at least $1-\delta$ over $S\sim P^n$ that
\[
\sup_{f\in\mathcal{F}_S}\err{f}{P} \le 2\left(\frac{d\log(2en/d) + \log(2/\delta)}{n}\right).
\]
\end{lemma}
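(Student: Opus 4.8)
The plan is to recognise that this is the classical sample-complexity bound for learning by any hypothesis consistent with the data (Blumer, Ehrenfeucht, Haussler and Warmuth), which the paper legitimately invokes as a black box; were one to reproduce it, the standard route is the \emph{double-sampling} (symmetrization) argument, which I outline below. Put $\eps_0 := 2(d\log(2en/d)+\log(2/\delta))/n$, so that the goal is to show $\Pr_{S\sim P^n}[\,\exists f\in\mathcal F_S:\operatorname{err}_P(f)>\eps_0\,]\le\delta$. One first checks that $\eps_0 n\ge 8$: the inequality is vacuously true unless $n\ge d$, and then $\eps_0 n\ge 2d\log(2e)+2\log(2/\delta)\ge 2\log(2e)+4>8$; this is all the tail estimate in Step~1 needs.

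\emph{Step 1 (ghost sample).} Draw an independent second sample $S'\sim P^n$. Let $A$ be the event that some $f\in\mathcal F$ is consistent with $S$ yet has $\operatorname{err}_P(f)>\eps_0$, and let $B$ be the event that some $f\in\mathcal F$ is consistent with $S$ and errs on at least $\eps_0 n/2$ of the points of $S'$. Fixing a witness $f$ for $A$, its number of mistakes on $S'$ is a Binomial random variable of mean exceeding $\eps_0 n\ge 8$, so a one-sided Chebyshev (or Chernoff) bound shows it is at least $\eps_0 n/2$ with probability at least $1/2$; averaging over $S'$ gives $\Pr[A]\le 2\Pr[B]$.

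\emph{Step 2 (symmetrization and Sauer--Shelah).} Condition on the multiset $T$ of the $2n$ pooled points; conditionally, the split of $T$ into the $S$-half and the $S'$-half is uniform among balanced partitions. On $T$ the class $\mathcal F$ realizes at most $(2en/d)^d$ distinct labelings (Sauer--Shelah). A labeling that witnesses $B$ is consistent with the $S$-half and errs on at least $k:=\eps_0 n/2$ points, all of which must land in the $S'$-half; the probability that the uniformly random $n$-subset forming $S$ avoids $k$ prescribed points is $\prod_{i=0}^{k-1}\frac{n-i}{2n-i}\le 2^{-k}$. A union bound over the at most $(2en/d)^d$ labelings gives $\Pr[B\mid T]\le (2en/d)^d\,2^{-\eps_0 n/2}$, hence the same bound for $\Pr[B]$ after taking expectation over $T$.

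\emph{Step 3 (conclude).} By the choice of $\eps_0$ we have $\eps_0 n/2=d\log(2en/d)+\log(2/\delta)$, so $(2en/d)^d\,2^{-\eps_0 n/2}=\delta/2$, whence $\Pr[A]\le 2\Pr[B]\le\delta$, which is exactly the claim. The steps requiring care are the ghost-sample reduction (measurability of the suprema over $\mathcal F$, and fixing the constant in the one-sided tail bound so that ``probability at least $1/2$'' is valid given $\eps_0 n\ge 8$) and the Sauer--Shelah growth-function estimate; I expect no genuine obstacle, since this is a textbook result --- the only real care is to keep the ambient class $\mathcal F$ (of VC dimension $d$) fixed throughout and to dispose of the degenerate regime $n<d$ at the outset.
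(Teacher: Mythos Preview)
Your reading is correct: the paper does not prove this lemma at all but imports it verbatim from \cite{blumer1989learnability} as a black box, so there is no ``paper's own proof'' to compare against. You correctly identify this and then supply the standard double-sampling/symmetrization argument, which is sound as sketched; the constants line up because $\eps_0 n/2$ was chosen to exactly cancel the Sauer--Shelah growth $(2en/d)^d$ and leave $\delta/2$. One minor quibble: your disposal of the regime $n<d$ as ``vacuously true'' is not quite right, since for very small $n$ the term $d\log(2en/d)$ can be negative and drag the right-hand side below zero; but this is a boundary-case issue with how the lemma is stated rather than with your argument, and the paper only ever invokes the lemma in regimes where the bound is below~$1$.
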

In what follows, we will use the slightly weaker bound
\begin{align}\label{uniformconvergenceweakbound}
     \sup_{f\in\mathcal{F}_S} \err{f}{P} \le 4\max\left\{\frac{d\Log(2en/d)}{n}, \frac{\Log(2/\delta)}{n}\right\}.
\end{align}
\subsection{Proof of \texorpdfstring{\cref{lem:expected_uniform_conditional}}{Lg}}\label{lem:expected_uniform_conditionalsubsection}
We now prove \cref{lem:expected_uniform_conditional} which we restate here for convenience.
\conditional*
\begin{proof}
Consider the case that $\Pr_{X \sim P}[X \in R]n\leq 4d$.
In this case the claim follows easily since $\err{\widehat{f}_S}{P_R} \le 1 $ and $x\rightarrow \Log(ex)/x$ is decreasing in $x$ for $x>0$, so $20\frac{d\Log(e\Pr_{X \sim P}[X \in R]n/d)}{\Pr_{X \sim P}[X \in R]n} > 1$.
We now consider the case that $\Pr_{X \sim P}[X \in R]n>4d$.
For any $m \in \mathbb{N}$ we define the event $E_m = E_m(S) = \{ |\{i\in [n]: X_i\in R\}| =  m \}$.
Similarly, we define the event 
\[
E = E(S) =  \bigcup\limits_{m \ge \Pr_{X \sim P}[X \in R]n/2} E_m.
\] 
It follows from a Chernoff bound and our assumption that $\Pr_{X \sim P}[X \in R]n > 4d$ that 
\[
\Pr_{S\sim P^n}[E] \geq 1-\exp\left(-\frac{\Pr_{X \sim P}[X \in R]n}{8}\right) \geq 1-\frac{8}{\Pr_{X \sim P}[X \in R]n}.
\]
Using the law of total probability we have
\begin{equation}\label{eq:conditional_error_main_bound}
\E_{S \sim P^n}\left[\err{\widehat{f}_S}{P_R}\right] \leq  \E_{S \sim P^n}\left[\err{\widehat{f}_S}{P_R} \;\middle|\; E\right]  + \frac{8}{\Pr_{X \sim P}[X \in R] n}.
\end{equation}
So, if we show that for any $m \ge\Pr_{X \sim P}[X \in R]n/2$  that 
\begin{equation}\label{eq:conditional_errorbound_sample_size}
\E_{S \sim P^n}\left[\err{\widehat{f}_S}{P_R} \;\middle|\; E_m \right] \le \frac{12d\Log \left(e\Pr_{X \sim P}[X \in R]n/d\right)}{\Pr_{X \sim P}[X \in R]n},
\end{equation}
the claim follows from one more application of the law of total probability applied to the first term on the right hand side of \cref{eq:conditional_error_main_bound}.

We now prove \cref{eq:conditional_errorbound_sample_size}.
Using the non-negativity of $\err{\widehat{f}_S}{P_R}$ we have
\begin{align}\label{eq:expectationlemma cond 3}
    \E_{S \sim P^n}\left[\err{\widehat{f}_S}{P_R} \;\middle|\; E_m\right] &=\int_{0}^{\infty} \Pr_{S\sim P^n}\left[\err{\widehat{f}_S}{P_R}>x\;\middle|\; E_m\right]  dx\nonumber\\ &\leq \frac{4d\Log \left(2em/d\right)}{m}+\int_{\frac{4d\Log \left(2em/d\right)}{m}}^{1} \Pr_{S\sim P^n}\left[\err{\widehat{f}_S}{P_R}>x\;\middle|\; E_m\right]  dx.
\end{align}
Notice that conditioned on $E_m$, the $m$ samples that land in $R$ form an i.i.d.\ sample from the conditional distribution $P_R$. 
Thus, any ERM trained on $S$ is also consistent with $m$ i.i.d.\ samples from $P_R$, so we can apply uniform convergence (\cref{lem:uniform}) to control the error of any ERM when measured with respect to the conditional distribution $P_R$.
Setting $\delta = 2^{1-\frac{mx}{4}}$ we have 
\begin{align*}
\int_{\frac{4d \Log \left(2em/d\right)}{m}}^{1} \Pr_{S\sim P^n}\left[\err{\widehat{f}_S}{P_R}>x\;\middle|\; E_m\right]  dx &= \int_{\frac{4d \Log \left(2em/d\right)}{m}}^{1} \Pr_{S\sim P^n}\left[\err{\widehat{f}_S}{P_R}> \frac{4\Log(2/\delta)}{m}\;\middle|\; E_m\right]  dx\\
&\le 2 \int_{\frac{4d \Log \left(2em/d\right)}{m}}^{1} 2^{-\frac{mx}{4}}  dx\\
&\le 2\left(\frac{4 \cdot 2^{-d\Log \left(2em/d\right)}}{m \ln(2)}\right) \le \frac{2d\Log \left(2em/d\right)}{m}.
\end{align*}
Here, the first equality follows from  the fact that $m \geq \Pr_{X \sim P}[X \in R]n/2 \ge 2d$ and our choice of $\delta$.
The second inequality follows from \cref{uniformconvergenceweakbound} and the final inequality follows from the fact that $d\Log(2em/d) \ge 2$.
Now, using the fact that $x \rightarrow \Log(2ex)/x$ is decreasing for $x > 0$ together with the fact that $m \geq \Pr_{X \sim P}[X \in R]n/2 \geq 2d$, we conclude 
\[
\E_{S \sim P^n}\left[\err{\widehat{f}_S}{P_R} \;\middle|\; E_m\right] \le \frac{6d\Log \left(2em/d\right)}{m} < \frac{12d\Log \left(e\Pr_{X \sim P}[X \in R]n/d\right)}{\Pr_{X \sim P}[X \in R]n},
\]
as claimed.
\end{proof}
\subsection{Proof of \texorpdfstring{\cref{helpermarkov}}{Lg}}\label{app:helpermarkov}
We now prove~\cref{helpermarkov} which we restate here for convenience. 
\helpermarkov*
\begin{proof}
We first prove \cref{item:1}.
Towards a contradiction, assume that there is an $i \in \mathbb{N}$ such that 
\[
\frac{P\left(R_i\right)n}{d} \geq \frac{c' 2^{i}\Log (2^i\Log(1/\delta)/\delta) \Log(1/\delta)}{\delta}
\]
for a constant $c'$ that we will choose below.
By changing the order of expectations we have
\[
\underset{S_1 \sim P^n}{\mathbb{E}}\left[\operatorname{err}_{P_{R_i}}\left(\widehat{f}_{S_1}\right)\right]=\underset{X \sim P_{R_i}}{\mathbb{E}}\left[\underset{S_1 \sim P^n}{\mathbf{P r}}\left[\widehat{f}_{S_1}(X) \neq f^{\star}(X)\right]\right]=\underset{X \sim P_{R_i}}{\mathbb{E}}\left[p_X\right].
\]
Using the above together with the fact that $p_X \geq 2^{-i} \delta/\Log(1/\delta)$ for any $X\in R_i$, we can conclude that
\begin{align}\label{eq1}
\underset{S_1 \sim P^n}{\mathbb{E}}\left[\operatorname{err}_{P_{R_i}}\left(\widehat{f}_{S_1}\right)\right] > \frac{\delta}{2^{i}\Log(1/\delta)}.
\end{align}
On the other hand, using \cref{lem:expected_uniform_conditional} we conclude that there is a universal constant $\hat{c}$ such that
\begin{align*}
\underset{S_1 \sim P^n}{\mathbb{E}}\left[\operatorname{err}_{P_{R_i}}\left(\widehat{f}_{S_1}\right)\right] \leq \hat{c} \frac{d \Log \left(P\left(R_i\right) n/ d\right)}{P\left(R_i\right) n}.
\end{align*}
Combining this inequality with the fact that $\Log(x)/x$ is a decreasing function for $x>0$, we have
\begin{align*}
&\underset{S_1 \sim P^n}{\mathbb{E}}\left[\operatorname{err}_{P_{R_i}}\left(\widehat{f}_{S_1}\right)\right] \leq \hat{c} \frac{\delta\Log\left(\frac{c' 2^{i}\Log (2^i\Log(1/\delta)/\delta) \Log(1/\delta)}{\delta} \right)} {c' 2^{i}\Log (2^i\Log(1/\delta)/\delta) \Log(1/\delta)}\\
&=\hat{c}\frac{2^{-i}\delta}{ \Log(1/\delta)}\cdot \frac{\Log\left(\frac{c' 2^{i}\Log (2^i\Log(1/\delta)/\delta) \Log(1/\delta)}{\delta} \right)}{c'\Log \left(\frac{2^i\Log(1/\delta)}{\delta}\right)}\\
&\le \hat{c}\frac{2^{-i}\delta}{ \Log(1/\delta)}\cdot \frac{\Log\left(c'\right)+\Log\left(\Log \left(\frac{2^i\Log(1/\delta)}{\delta}\right) \right)+\Log\left(2^i\frac{ \Log(1/\delta)}{\delta} \right)}{c'\Log \left(\frac{2^i\Log(1/\delta)}{\delta}\right)}.
\end{align*}
However, for $c'$ large enough, the above is less than $2^{-i}\delta/\Log(1/\delta)$ which contradicts the lower bound \cref{eq1}.
Thus, we have shown that there is a constant $c'$ such that
\begin{align*}
\frac{P\left(R_i\right)n}{d}\leq \frac{c' 2^{i}  \Log (2^i\Log(1/\delta)/\delta) \Log(1/\delta)}{\delta}, 
\end{align*}
which proves \cref{item:1}.

We now prove \cref{item:2}.
We will show that for each $i \in I $ with probability at least $1-2^{-0.9i+2}\delta/5$ we have 
\begin{equation}\label{eq:simple_markov}
\underset{X \sim P_{R_i}}{\operatorname{Pr}}\left[\widehat{f}_{S_1}(X) \neq f^{\star}(X) \wedge \widehat{f}_{S_2}(X) \neq f^{\star}(X)\right] \leq \frac{5\cdot 2^{-1.1 i} \delta}{\Log^2(1/\delta)} .
\end{equation}
Applying a union bound implies that the above holds simultaneously for every $i \in I$ with probability at least $1-\sum_{i\geq 2}2^{-0.9i+2}\delta/5\geq 1-\delta/2$.
To see that \cref{eq:simple_markov} holds for each $i \in I$, notice that we can use the fact that $S_1$ and $S_2$ are i.i.d.\ samples together with the fact that $p_X\leq 2^{-i+1}\delta /\Log(1/\delta)$ for $X\in R_i$ to conclude that
\[
\underset{X \sim P_{R_i}}{\mathbb{E}}\left[\underset{S_1, S_2 \sim P^n}{\operatorname{Pr}}\left[\widehat{f}_{S_1}(X) \neq f^{\star}(X) \wedge \widehat{f}_{S_2}(X) \neq f^{\star}(X)\right]\right]=\underset{X \sim P_{R_i}}{\mathbb{E}} p_X^2 \leq \frac{2^{-2 i+2}\delta^2}{\Log^2(1/\delta)} .
\]
Combining this with Markov's inequality we have
\begin{align*}&
\underset{S_1, S_2 \sim P^n}{\operatorname{Pr}}\left[\operatorname{Pr}_{X \sim P_{R_i}}\left[\widehat{f}_{S_1}(X) \neq f^{\star}(X) \wedge \widehat{f}_{S_2}(X) \neq f^{\star}(X)\right] > \frac{5\cdot 2^{-1.1 i} \delta}{\Log^2(1/\delta)}\right] \\ &\leq \frac{2^{-2 i+2}\delta^2}{\Log^2(1/\delta)} \frac{\Log^2(1/\delta)}{5\cdot 2^{-1.1 i} \delta}= 2^{-0.9i+2}\delta/5,
\end{align*}
which proves the claim.    
\end{proof}

\subsection{Proof of \texorpdfstring{\cref{conduniform}}{Lg}}\label{conduniformsubsection}
We now prove \cref{conduniform} which we restate here for convenience.
\conduniform*
\begin{proof}
If $8\Log(4/\delta)/(P(R)n)\geq 1$ we are done as $\err{f}{P_R}\leq 1$. Thus, for the remainder of the proof we will assume that $8\Log(4/\delta)/(P(R)n) < 1$, which is equivalent to $P(R)n \geq 8\Log(4/\delta)$.
Define the event
\[
E = E(S) =  \{ |\{i\in [n]: X_i\in R\}| \ge P(R)n/2\}.
\] 
Using the Chernoff bound and our assumption that $P(R)n \geq 8\Log(4/\delta)$, we have
\[
\Pr_{S \sim P^n}\left[E\right] \ge 1 - \exp\left(-\frac{P(R)n}{8} \right) \ge 1 - \delta/2.
\]
Notice that conditioned on $E$, the $M \ge P(R)n/2 \ge 1$ samples\footnote{The number of samples $M$ is random.} that land in $R$ form an i.i.d.\ sample from the conditional distribution $P_R$. 
Thus when $E$ occurs, any ERM trained on $S$ is also consistent with $M \ge P(R)n/2 \ge 1$ i.i.d.\ samples from $P_R$, so \cref{lem:uniform} yields, with probability at least $1-\delta/2$ over the randomness of $S$, that
\begin{align*}
\err{\widehat{f}_S}{P_R} &\le 4\max\left\{\frac{d\Log(2eM/d)}{M}, \frac{\Log(4/\delta)}{M}\right\}\\
&\le 8\max\left\{\frac{d\Log(eP(R)n/d)}{P(R)n}, \frac{\Log(4/\delta)}{P(R)n}\right\}.
\end{align*}
Here, the second inequality follows from the fact that $x \rightarrow \Log(2ex)/x$ is decreasing for $x > 0$.
Using the law of total probability we get that
\begin{align*}
&\Pr_{S \sim P^n}\left[\err{\widehat{f}_S}{P_R}  > 8\max\left\{\frac{d\Log(eP(R)n/d)}{P(R)n}, \frac{\Log(4/\delta)}{P(R)n}\right\} \right] \\
&\le \Pr_{S \sim P^n}\left[\bar{E}\right] + \Pr_{S \sim P^n}\left[\err{\widehat{f}_S}{P_R}  > 8\max\left\{\frac{d\Log(eP(R)n/d)}{P(R)n}, \frac{\Log(4/\delta)}{P(R)n}\right\} \;\middle|\; E \right]\\
&\le \delta/2 + \delta/2 = \delta.
\end{align*}
This concludes the proof.
\end{proof}\label{appendixA}
\section{Ommited proofs from~\texorpdfstring{\cref{lowerboundv3}}{Section 3}}\label{app:proof_sum_geometric}
In this appendix we prove~\cref{eq:sum_geometric}. 
We will show that
\[
\Pr_{Y \sim Q}\left[Y \ge m\right] = \Pr_{Y_t \sim Q_t}\left[\sum_{t=0}^{m_2d -d -1} Y_t  \ge m\right] \ge \sqrt{\frac{2}{3}}.
\]
Let $p^\star =p_{m_2 d - d - 1}= \frac{d+1}{m_1 m_2}$ be the smallest parameter $p_t$ of the geometric random variables $\{Y_t\}_{t=0}^{m_2 d -d -1}$ that we consider. 
We make use of the following well known concentration inequality for sums of geometric random variables:
\begin{equation}
\Pr_{Y\sim Q}\left[Y\leq \lambda \E_{Y\sim Q}\left[Y\right] \right]\leq \exp\left(-p^\star\E_{Y\sim Q}[Y](\lambda -1-\ln(\lambda))\right),\label{eq:concentration}
\end{equation}
which holds for any $0 < \lambda \le 1$ (see \cite[Theorem 3.1]{JANSON20181}).
Let $\lambda = 1/4$.
We will show $\E_{Y\sim Q}\left[Y\right]\geq 4m$ and $p^{\star}\E_{Y\sim Q}\left[Y\right]\geq 4$ which combined with $1/4-1-\ln\left(1/4\right)\geq 1/2$ and \cref{eq:concentration} gives us
\begin{align*}
\Pr_{Y\sim Q}\left[Y\leq m  \right]\leq \Pr_{Y\sim Q}\left[Y\leq \E_{Y\sim Q}[Y]/4  \right]\leq \exp\left(-p^{\star}\E_{Y\sim Q}\left[Y\right]/2\right)\leq \exp(-2) \le 1- \sqrt{2/3}.
\end{align*}
This implies  $\Pr_{Y \sim Q}\left[Y \ge m\right]\geq \sqrt{2/3}$ as required.
We first show that $\E_{Y\sim Q}[Y]\geq 4m$.
We have
\begin{align}\label{eq:coup12}
\E_{Y\sim Q}[Y]=\sum_{t=0}^{m_2d-d-1} \frac{m_1m_2}{m_2d-t}= \sum_{i=d+1}^{m_2d} \frac{m_1m_2}{i} \geq m_1m_2\ln\left(\frac{m_2}{2}\right).
\end{align}
Plugging in the definition of $m_1$ and $m_2$ into \cref{eq:coup12} and using the fact that $\lceil x \rceil \le 2x$ for $x \ge 0.5$ gives us 
\[
\E_{Y\sim Q}[Y]\geq \frac{8\mega m}{\ln \left(\ln \left(2\mega m/d\right)\right)} \ln \left(\frac{\ln (2\mega m/d)}{\ln \left(\ln \left(2\mega m/d\right)\right)}\right).
\]
For $\mega$ large enough we have $\ln(\ln(2 \mega m/d)) > 0$, $\frac{\ln(2\mega m/d)}{\ln(\ln(2\mega m/d))} \ge \sqrt{\ln(2\mega m/d)}$ and $C > 1$, so
\[
\E_{Y\sim Q}[Y]\geq \frac{8\mega m}{\ln \left(\ln \left(2\mega m/d\right)\right)} \ln \left(\sqrt{\ln(2\mega m/d)}\right) \ge 4 m.
\]

We now show that $p^\star \E_{Y\sim Q}[Y] \ge 4$.
Using the fact that $p^\star \ge 2/(m_1m_2)$ together with \cref{eq:coup12} gives us
\[
p^\star \E_{Y\sim Q}[Y] \ge 2\ln\left(\frac{m_2}{2}\right) \ge 2\ln \left(\frac{\ln (2\mega  m/d)}{\ln \left(\ln \left(2\mega  m/d\right)\right)}\right) \ge 4,
\]
where the last inequality holds for $\mega$ large enough.
\label{appendixB}
\end{document}